\documentclass{article}

\usepackage{microtype}
\usepackage{graphicx}
\usepackage{booktabs}
\usepackage{hyperref}

\usepackage[accepted]{icml2021}

\usepackage{amsmath}
\allowdisplaybreaks
\usepackage{amssymb}
\usepackage{amsthm}
\usepackage{mathtools}
\usepackage{subcaption}
\usepackage{multirow}
\newcommand{\D}{\mathcal{D}}
\newcommand{\E}{\mathop{\mathbb{E}}}
\newcommand{\Var}{\mathop{\mathrm{Var}}}
\newcommand{\Cov}{\mathop{\mathrm{Cov}}}
\newcommand{\Vhat}{\hat{V}}
\newcommand{\thetahat}{\hat{\theta}}
\newcommand{\MC}{\mathrm{MC}}
\newcommand{\IS}{\mathrm{IS}}
\newcommand{\WIS}{\mathrm{WIS}}
\newcommand{\OSIRIS}{\mathrm{OSIRIS}}
\newcommand{\given}{\,|\,}
\newtheorem{theorem}{Theorem}
\newtheorem{lemma}{Lemma}
\newtheorem{proposition}{Proposition}
\newtheorem{definition}{Definition}
\usepackage{enumitem}
\usepackage{array}

\usepackage{multibib}
\newcites{main}{References}
\newcites{appendix}{Supplementary References}

\begin{document}

\twocolumn[
\icmltitlerunning{State Relevance for Off-Policy Evaluation}
\icmltitle{State Relevance for Off-Policy Evaluation}

\begin{icmlauthorlist}
\icmlauthor{Simon P. Shen}{h}
\icmlauthor{Yecheng Jason Ma}{p}
\icmlauthor{Omer Gottesman}{b}
\icmlauthor{Finale Doshi-Velez}{h}
\end{icmlauthorlist}

\icmlaffiliation{h}{Harvard University, Cambridge, MA}
\icmlaffiliation{p}{University of Pennsylvania, Philadelphia, PA}
\icmlaffiliation{b}{Brown University, Providence, RI}

\icmlcorrespondingauthor{Simon P. Shen}{simonshen@fas.harvard.edu}

\icmlkeywords{off-policy evaluation, importance sampling, curse of the horizon}

\vskip 0.3in
]

\printAffiliationsAndNotice{} 

\begin{abstract}
Importance sampling-based estimators for off-policy evaluation (OPE) are valued for their simplicity, unbiasedness, and reliance on relatively few assumptions. However, the variance of these estimators is often high, especially when trajectories are of different lengths.  In this work, we introduce \textit{Omitting-States-Irrelevant-to-Return Importance Sampling} (OSIRIS), an estimator which reduces variance by strategically omitting likelihood ratios associated with certain states.  We formalize the conditions under which OSIRIS is unbiased and has lower variance than ordinary importance sampling, and we demonstrate these properties empirically.
\end{abstract}

\section{Introduction}
\label{sec:intro}
In the context of reinforcement learning, our work focuses on the off-policy evaluation (OPE) problem, where the goal is to estimate the value of a given policy using historical data collected under a different policy \citepmain{sutton2018reinforcement}. OPE is often a necessary step in many real-world applications of reinforcement learning whenever running evaluation policies is costly or risky, for example, in healthcare and education settings \citepmain{Murphy2001,Mandel2014}. In particular, we focus on the OPE approaches based on importance sampling (IS), which correct the historical data to account for the difference between the policies \citepmain{precup2000eligibility}. IS-based estimators are popular for their appealing statistical properties \citepmain{thomas2016data,jiang2016doubly,farajtabar2018more,thomas2015safe}.

However, current IS-based estimators struggle in scenarios involving trajectories of different lengths.  In these settings, IS-based estimators have large variance that increases with trajectory length because IS weights are products of likelihood ratios \citepmain{doroudi2018importance}. Resolving this issue is important as many domains have trajectories with different lengths: in health settings, patients' length of stays may vary drastically; in education settings, students may spend different amounts of time using various online tools.

Motivated by the observation that IS variance is driven by a large and varying number of likelihood ratios, we present a new estimator, \textit{\textbf{O}mitting-\textbf{S}tates-\textbf{I}rrelevant-to-\textbf{R}eturn \textbf{I}mportance \textbf{S}ampling} (OSIRIS), which strategically omits likelihood ratios associated with certain states. The goal of the estimator is to reduce IS variance while introducing minimal bias. We first identify the variance contributed to ordinary IS by likelihood ratios that would be omitted by OSIRIS. This analysis motivates the method's idea to omit likelihood ratios corresponding to ``irrelevant'' states, where the action taken does not affect the trajectory return, and this omission criterion keeps OSIRIS unbiased. Based on this criterion, we describe a practical algorithm using a statistical test to estimate state relevance. Finally, we experimentally validate this implementation of OSIRIS on a suite of discrete- and continuous-state environments. Because the estimator's procedure is to set ``irrelevant'' likelihood ratios to 1, it can be easily used alongside other variants of importance sampling estimators.

\section{Background}
\label{sec:background}

\paragraph{Markov Decision Process}

We consider a standard reinforcement learning framework in which an agent, characterized by a policy $\pi$, interacts with a finite Markov decision process (MDP), characterized by a tuple $(\mathcal{S},\mathcal{A},P,R,\gamma)$. $\mathcal{S}$ and $\mathcal{A}$ represent the state and action spaces, respectively; $P(s'\given s,a)$ and $R(s,a)$ represent the transition distribution and the reward function, respectively; and $\gamma \in [0,1]$ is the temporal discount factor. The agent starts at an initial state $s_1$ drawn from the initial state distribution $P(s_1)$. At each time step $t$, the agent performs an action $a_t \sim \pi(\cdot\given s_t)$, observes reward $r_t = R(s_t,a_t)$, and transitions to state $s_{t+1}\sim P(\cdot \given s_t, a_t)$. Once the agent reaches a terminal state (e.g. at time $T+1$), the trajectory is complete and is defined as $\tau = (s_1, a_1, r_1,\ldots, s_T, a_T, r_T, s_{T+1})$. The discounted rewards collected between any times $t_1$ and $t_2$ in trajectory $\tau$ is defined as $g_{t_1:t_2}(\tau) \equiv \sum_{t=t_1}^{t_2}\gamma^{t-t_1} r_{t}$, and the full trajectory return is simply denoted by $g(\tau) \equiv g_{1:T}(\tau)$.

\paragraph{Policy Evaluation}

In the policy evaluation problem, we are given historical data as a batch of trajectories $\D \equiv \{\tau^{(1)},...,\tau^{(N)}\}$ collected under a behavior policy $\pi_b$, and we want to estimate the value $V^{\pi_e}\equiv \displaystyle\E_{\tau \sim \pi_e}\left[ g(\tau) \right]$ of an evaluation policy $\pi_e$. We use the notation $\tau\sim\pi$ to indicate that trajectories $\tau$ are sampled from the joint probability distribution $P(\tau;\pi)=P(s_1)\prod_{t=1}^{T}\pi(a_t\given s_t)P(s_{t+1}\given s_t, a_t)$, and $\D\sim\pi$ indicates i.i.d. sampling of $N$ such trajectories.

If $\pi_b=\pi_e$, we can perform \textit{on-policy} evaluation with the unbiased Monte Carlo (MC) estimator $\Vhat_{\MC}^{\pi_e}(\D) \equiv \frac{1}{|\D|} \sum_{\tau\in\D} g(\tau)$.

\paragraph{Importance Sampling}

In many real-world scenarios, $\pi_b\neq\pi_e$, so we can only perform \textit{off-policy} evaluation. Among OPE methods, estimators based on importance sampling (IS) are valued for their simplicity, unbiasedness, and reliance on relatively few assumptions \citepmain{precup2000eligibility}. The ordinary IS estimator is given by the weighted average
\begin{equation} \label{eqn:is-estimator}
    \Vhat_{\IS}^{\pi_e}(\D) \equiv \frac{1}{|\D|} \sum_{\tau\in\D} g(\tau) \rho(\tau)
\end{equation}
where the weight $\rho(\tau)$ is the product of likelihood ratios
\begin{equation} \label{eqn:is-weight}
    \rho(\tau) \equiv \prod_{t=1}^{T} \frac{\pi_e(a_t \given s_t)}{\pi_b(a_t \given s_t)}
\end{equation}
We also introduce shorthand notation: $\rho_t(\tau)\equiv \frac{\pi_e(a_t \given s_t)}{\pi_b(a_t \given s_t)}$ and $\rho_{t_1:t_2}(\tau)\equiv \prod_{t=t_1}^{t_2} \rho_{t}(\tau)$. The IS estimator is an unbiased estimator of $V^{\pi_e}$ but typically has large variance, which is the subject of Section~\ref{sec:variance-sources}. To reduce variance, the commonly used weighted IS (WIS) estimator effectively scales the importance weights $\rho(\tau)$ to be between $0$ and $1$: $\Vhat_{\WIS}^{\pi_e}(\D) = \frac{\sum_{\tau\in\D} g(\tau) \rho(\tau)}{\sum_{\tau\in\D} \rho(\tau)}$ \citepmain{precup2000eligibility}. This estimator becomes biased but is consistent.

\section{Related Work}
\label{sec:related-work}

There has recently been significant interest in improving the accuracy of estimation based on importance sampling. Some approaches have included importance weight truncation \citepmain{Ionides2008,Su2019}, confidence bounds \citepmain{thomas2015hcope,thomas2015hcpi,Papini2019,Metelli2020}, and doubly robust estimation \citepmain{jiang2016doubly,thomas2016data,Su2019}. In this work, we focus on the unique challenges presented by the long-horizon setting \citepmain{doroudi2018importance}.

Instead of weighting entire observed trajectories, a recent family of methods showed promising results by calculating weights using estimates of the steady-state visitation distribution \citepmain{liu2018breaking,Xie2019}. This approach was shown to improve asymptotic behavior with respect to the horizon. However, all IS estimators will suffer when trajectories are long, so our fundamentally different approach is to strategically \textit{shorten} the horizon. Furthermore, there is still merit in trying to treat trajectories as a whole rather than breaking them apart into individual transitions, especially if the state space is believed to be partially observable.

\citetmain{doroudi2018importance} propose a per-horizon estimator which first groups trajectories by their lengths, performs WIS on each group, and finally averages these sub-estimates using horizon-corrective weights. However, if each such group is small, the sub-estimates will have large bias because WIS is biased. Furthermore, it is difficult to estimate the distribution of trajectory lengths under $\pi_e$, which is necessary to compute the horizon-corrective weights.

\citetmain{Guo2017} are motivated by options-based policies and leverage temporal abstraction to modify the per-decision importance sampling (PDIS) estimator, which weights the individual rewards from each transition. For settings without access to well-defined options-based policies, the authors suggest dropping all but the $k$-most recent likelihood ratios from each PDIS weight, where $k$ is chosen to minimize an estimate of the estimator's MSE. However, in addition to the difficulty of accurately estimating MSE, the estimator tends to overweight rewards near the end of a trajectory because they are multiplied by fewer likelihood ratios. Our perspective reveals using state relevance to flexibly omit likelihood ratios without favoring any part of the trajectory a priori.

\citetmain{Rowland2020} present a framework for conditional importance sampling, including the return-conditioned importance sampling (RCIS) estimator, which uses IS weights that are conditioned on trajectory return. This approach is designed to remove noise that is unrelated to determining trajectory return. However, RCIS uses a regressor to fit IS weights that are uncorrected and thus still vulnerable to the trajectory length issues discussed in Section~\ref{sec:variance-sources}. Under this conditional IS framework, OSIRIS weights are conditioned on richer relevance information rather than only the trajectory returns, and OSIRIS's likelihood ratio omission is designed to directly address trajectory length issues.

\section{Variance Reduction by Omitting Likelihood Ratios}

\subsection{Sources of Trajectory Length Variance}
\label{sec:variance-sources}

We are motivated by the observation that a major source of variance in IS-based estimators is the \textit{large} and \textit{variable} number of likelihood ratios in the IS weight.

Because the IS weight is the product of likelihood ratios, each of which is a random variable, the IS weight tends to have larger variance when trajectories are \textit{long}:
\begin{proposition} \label{prop:variance-length}
    For any subsets $\mathcal{T}_1 \subsetneq \mathcal{T}_2 \subseteq \{1, \ldots, T\}$, if $\pi_e \neq \pi_b$, and $\rho_1(\tau),\ldots,\rho_T(\tau)$ are mutually independent, then
    \begin{multline}
        \Var_{\tau\sim\pi_b}\big[  
            \prod_{t \in \mathcal{T}_1}
            \rho_t(\tau)
            \given s_1, \ldots, s_T
        \big]
        \\
        <
        \Var_{\tau\sim\pi_b}\big[ 
            \prod_{t \in \mathcal{T}_2}
            \rho_t(\tau)
            \given s_1, \ldots, s_T
        \big]
    \end{multline}
\end{proposition}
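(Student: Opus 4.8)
The plan is to isolate the extra likelihood ratios in $\mathcal{T}_2 \setminus \mathcal{T}_1$ and show that they inflate the variance multiplicatively. Write $X \equiv \prod_{t \in \mathcal{T}_1} \rho_t(\tau)$ and $Y \equiv \prod_{t \in \mathcal{T}_2 \setminus \mathcal{T}_1} \rho_t(\tau)$, so that $\prod_{t \in \mathcal{T}_2} \rho_t(\tau) = XY$. Since $\mathcal{T}_1 \subsetneq \mathcal{T}_2$, the index set $\mathcal{T}_2 \setminus \mathcal{T}_1$ is nonempty, and since the $\rho_t(\tau)$ are mutually independent and the two index sets are disjoint, $X$ and $Y$ are independent. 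Throughout I keep the conditioning on $s_1,\ldots,s_T$ implicit.

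First I would record the key moment identity: conditioned on $s_t$ with $a_t \sim \pi_b(\cdot \given s_t)$, we have $\E[\rho_t(\tau)] = \sum_a \pi_b(a \given s_t)\frac{\pi_e(a \given s_t)}{\pi_b(a \given s_t)} = \sum_a \pi_e(a \given s_t) = 1$. By independence this gives $\E[X] = \E[Y] = 1$ and hence $\E[XY] = \E[X]\E[Y] = 1$.

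Next I would expand both variances using these mean-$1$ facts. By independence $\E[(XY)^2] = \E[X^2]\E[Y^2]$, so
\begin{align}
    \Var[XY] - \Var[X] &= \big(\E[X^2]\E[Y^2] - 1\big) - \big(\E[X^2] - 1\big) \notag \\
    &= \E[X^2]\,\Var[Y], \notag
\end{align}
where I used $\Var[Y] = \E[Y^2] - 1$. It then remains to show this quantity is strictly positive.

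Finally I would bound the two factors. The factor $\E[X^2] = \Var[X] + 1 \geq 1 > 0$ is immediate. For $\Var[Y] > 0$, note $\Var[Y] = \prod_{t \in \mathcal{T}_2 \setminus \mathcal{T}_1}\E[\rho_t(\tau)^2] - 1$, and each factor satisfies $\E[\rho_t(\tau)^2] = \sum_a \frac{\pi_e(a \given s_t)^2}{\pi_b(a \given s_t)} \geq 1$ by Jensen (equivalently Cauchy--Schwarz), with equality exactly when $\pi_e(\cdot \given s_t) = \pi_b(\cdot \given s_t)$. The main obstacle is precisely this strictness step: it requires that $\pi_e$ and $\pi_b$ differ at some conditioned state $s_t$ with $t \in \mathcal{T}_2 \setminus \mathcal{T}_1$, so I would lean on the hypothesis $\pi_e \neq \pi_b$ to guarantee at least one such non-degenerate ratio, taking care to state the comparison at the conditioned states rather than as functions. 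With one factor strictly exceeding $1$ and the rest at least $1$, the product exceeds $1$, so $\Var[Y] > 0$ and the strict inequality follows.
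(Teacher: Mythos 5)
Your proof is correct and takes essentially the same route as the paper's: both arguments rest on the conditional mean-one property $\E[\rho_t(\tau)\given s_t]=1$, the factorization of second moments under mutual independence, and strict positivity of each $\E[\rho_t(\tau)^2]-1$ deduced from $\pi_e\neq\pi_b$. The only difference is bookkeeping --- you group the extra ratios into a single block $Y$ and exhibit the gap as $\E[X^2]\Var[Y]$, whereas the paper expands both variances as products of the individual factors $\Var[\rho_t(\tau)]+1$ and compares the two products; even the strictness caveat you flag (that $\pi_e$ and $\pi_b$ must differ at the conditioned states, not merely as functions) is handled no more rigorously in the paper's own proof.
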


The proof is in Appendix~\ref{appendix-sec:pf-variance-length}. To address this issue, we propose to strategically omit likelihood ratios from the IS weight product by setting them to 1, which has zero variance.

Furthermore, when trajectory lengths are \textit{variable}, they contribute variance that is not inherently meaningful to OPE. Intuitively, there are at least two kinds of information that can be represented in the IS weight: the individual actions taken during the trajectory and the trajectory length. First, each likelihood ratio in the IS weight measures how well a transition in the historical data follows the evaluation policy. This feature is intentional so that, in expectation, the overall IS weight corrects for the difference between the behavior and evaluation policies. However the distribution of the IS weight is also directly related to the number of likelihood ratios multiplied together. This is a result of the skew of the distribution of likelihood ratios: because the behavior probability $\pi_b$ appears in the denominator of the likelihood ratio, it is very rare to observe large/exploding likelihood ratios for transitions in the finite historical data, which are sampled from $\pi_b$, but it can be very common to observe small/vanishing likelihood ratios. The effect of this skew is that longer trajectories tend to have smaller IS weights because they multiply more likelihood ratios. This idea is formalized in:
\begin{proposition}
    For any subsets $\mathcal{T}_1\subsetneq\mathcal{T}_2\subseteq\{1, \ldots, T\}$, if $\pi_e \neq \pi_b$, then
    \begin{equation}
        \displaystyle
        \E_{\tau\sim\pi_b}\big[ \log \prod_{t\in\mathcal{T}_1} \rho_t(\tau) \big]
        >
        \displaystyle
        \E_{\tau\sim\pi_b}\big[ \log \prod_{t\in\mathcal{T}_2} \rho_t(\tau) \big]
    \end{equation}
\end{proposition}

The proof is in Appendix~\ref{appendix-sec:pf-jensen}. The log allows us to easily reveal the relationship between trajectory length and IS weight, and it is appropriate because the IS weight multiplies likelihood ratios together. Although the IS estimator is unbiased in expectation, this relationship can be problematic for finite data sizes and especially when trajectory lengths are long and highly variable. In these cases, the IS weights can become dominated by the information about the number of likelihood ratios rather than the meaningful information about how well the trajectory follows the evaluation policy.

\subsection{Omission of Meaningless Likelihood Ratios}
\label{sec:general-omission}

We have identified two problems: when trajectory lengths are \textit{variable}, IS tends to overweight short trajectories in a way that is not inherently meaningful; and when trajectory lengths are \textit{long}, the extra likelihood ratios contribute extra IS variance overall. Our goal is then to strategically omit likelihood ratios in a way that preserves/emphasizes meaningful variance related to the actions taken in the historical data while minimizing meaningless variance that is only related to trajectory length. We begin by decomposing the IS estimator variance, which will suggest such a method.

Assume we have a mapping $\theta' : \mathcal{S} \to \{0,1\}$ that identifies which likelihood ratios should be kept vs omitted in the IS weight.\footnote{The analysis here in Section~\ref{sec:general-omission} works for any general mapping $\theta'$. But we write the input space as just $\mathcal{S}$ in order to smoothly introduce the idea of \textit{state} relevance in Section~\ref{sec:state-indep}.} Omitting likelihood ratios is equivalent to setting them to 1. This procedure can easily be applied to any IS-based estimator (see Extensions in Section~\ref{sec:state-indep}), but for now we formally define the procedure on the ordinary IS estimator:
\begin{definition} \label{def:general-osiris}
    Given any mapping $\theta' : \mathcal{S} \to \{0,1\}$, the OSIRIS weight is defined as
    \begin{equation} \label{eqn:OSIRIS-weight}
        \rho_{\theta'}(\tau) \equiv \prod_{t=1}^{T} \big[ \rho_t(\tau) \big]^{\theta'(s_t)}
    \end{equation}
    and accordingly, the OSIRIS estimator is defined as
    \begin{equation} \label{eqn:OSIRIS-estimator}
        \Vhat_{\OSIRIS}^{\pi_e}(\D; \theta') \equiv \frac{1}{|\D|}\sum_{\tau\in\D}
        g(\tau)
        \,
        \rho_{\theta'}(\tau)
    \end{equation}
\end{definition}

We also introduce notation for the product of the omitted likelihood ratios: $\rho_{\theta'}^\complement(\tau) \equiv \prod_{t=1}^{T} \big[ \rho_t(\tau) \big]^{1 - \theta'(s_t)}$. This quantity directly relates the OSIRIS estimator to the ordinary IS estimator:
\begin{equation}
    \Vhat_\IS^{\pi_e}(\tau) = \Vhat_\OSIRIS^{\pi_e}(\tau; \theta')\cdot \rho_{\theta'}^\complement(\tau)
\end{equation}
where we have abused notation by writing $\Vhat^{\pi_e}(\tau)$ to represent the single-trajectory estimator $\Vhat^{\pi_e}(\{\tau\})$. We use this fact to decompose the IS estimator variance:
\begin{theorem} \label{thm:OSIRIS-variance}
    Given any mapping $\theta' : \mathcal{S} \to \{0,1\}$, the variance of the OSIRIS estimator is:
    \begin{subequations}
        \begin{gather}
            \Var_{\D\sim\pi_b}[\Vhat_{\OSIRIS}^{\pi_e}(\D; \theta')]
        	=
            \Var_{\D\sim\pi_b}[\Vhat_{\IS}^{\pi_e}(\D)]
            \hspace{1in}\nonumber
        	\\
            + \frac{1}{|\D|}
            \E_{\D\sim\pi_b}[\Vhat_{\IS}^{\pi_e}(\tau^{(1)})]^2
        	- \frac{1}{|\D|}
        	\E_{\D\sim\pi_b}[\Vhat_{\OSIRIS}^{\pi_e}(\tau^{(1)}; \theta')]^2
        	\label{eqn:OSIRIS-variance-centers}
        	\\
        	- \frac{1}{|\D|}
            	\E_{\D\sim\pi_b}[
            		\Vhat_{\OSIRIS}^{\pi_e}(\tau^{(1)}; \theta')^2
            		]
            	\,
            	\Var_{\D\sim\pi_b}[
            	    \rho_{\theta'}^\complement(\tau^{(1)})
            	    ]
            \label{eqn:OSIRIS-variance-var}
        	\\
        	\hspace{0.25in}
        	- \frac{1}{|\D|}
        	    \Cov_{\D\sim\pi_b}[ \Vhat_{\OSIRIS}^{\pi_e}(\tau^{(1)}; \theta')^2
        	    ,\, \rho_{\theta'}^\complement(\tau^{(1)})^2 ]
        	\label{eqn:OSIRIS-variance-cov}
        \end{gather}
        \label{eqn:OSIRIS-variance}
    \end{subequations}
\end{theorem}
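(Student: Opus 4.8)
The plan is to collapse the whole statement to a single-trajectory identity, isolate the one substantive fact it depends on, and finish with routine moment algebra.

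First I would use that both estimators are sample means of i.i.d.\ per-trajectory terms. Writing $X \equiv \Vhat_\OSIRIS^{\pi_e}(\tau;\theta')$ and $Y \equiv \rho_{\theta'}^\complement(\tau)$, the stated relation gives $\Vhat_\IS^{\pi_e}(\tau) = XY$. The variance-of-a-sample-mean identity then yields $\Var_{\D\sim\pi_b}[\Vhat_\OSIRIS^{\pi_e}(\D;\theta')] = \frac{1}{|\D|}\Var_{\tau\sim\pi_b}[X]$ and $\Var_{\D\sim\pi_b}[\Vhat_\IS^{\pi_e}(\D)] = \frac{1}{|\D|}\Var_{\tau\sim\pi_b}[XY]$. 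Since every remaining term in the claim is already $\frac{1}{|\D|}$ times a per-trajectory moment (e.g.\ $\E_{\D}[\Vhat_\IS^{\pi_e}(\tau^{(1)})]^2 = (\E_{\tau\sim\pi_b}[XY])^2$, and likewise for the others), multiplying the whole statement by $|\D|$ reduces it to the single-trajectory identity (all expectations under $\tau\sim\pi_b$) $\Var[X] = \Var[XY] + (\E[XY])^2 - (\E[X])^2 - \E[X^2]\Var[Y] - \Cov[X^2,Y^2]$.

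Second, and this is the crux, I would establish the lemma $\E_{\tau\sim\pi_b}[\rho_{\theta'}^\complement(\tau)] = 1$, i.e.\ $\E[Y]=1$. The cleanest route is a change of measure: define the hybrid policy $\pi_h$ that acts as $\pi_e$ on states with $\theta'(s)=0$ and as $\pi_b$ on states with $\theta'(s)=1$. Because $\theta'$ depends only on the state, $\pi_h$ is a valid policy, and one checks directly that $\prod_{t=1}^{T}\frac{\pi_h(a_t\given s_t)}{\pi_b(a_t\given s_t)} = \rho_{\theta'}^\complement(\tau)$, since each factor equals $\rho_t(\tau)$ exactly when $\theta'(s_t)=0$ and $1$ otherwise. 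Hence $\E_{\tau\sim\pi_b}[\rho_{\theta'}^\complement(\tau)] = \sum_\tau P(\tau;\pi_b)\,\frac{P(\tau;\pi_h)}{P(\tau;\pi_b)} = \sum_\tau P(\tau;\pi_h) = 1$, under the standard absolute-continuity assumption that $\pi_b$ has support wherever $\pi_e$ does.

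Third, I would finish by algebra. Expanding $\Var[XY] = \E[X^2Y^2] - (\E[XY])^2$ cancels the $(\E[XY])^2$ term; rewriting $\Cov[X^2,Y^2] = \E[X^2Y^2] - \E[X^2]\E[Y^2]$ cancels $\E[X^2Y^2]$ and leaves $\E[X^2]\E[Y^2]$; and using $\E[Y^2] - \Var[Y] = (\E[Y])^2$ collapses the right-hand side to $\E[X^2](\E[Y])^2 - (\E[X])^2$. Substituting $\E[Y]=1$ gives exactly $\E[X^2] - (\E[X])^2 = \Var[X]$, which is the left-hand side. The main obstacle is precisely the lemma: the omitted set $\{t : \theta'(s_t)=0\}$ is random because it depends on the realized states, and future states depend on past actions, so one cannot naively factor the expectation of the product across time steps; the hybrid-policy reformulation bypasses this by recognizing $\rho_{\theta'}^\complement$ as a genuine trajectory-level importance weight between two valid policies. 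Everything else is routine, with the only other care being the i.i.d.\ reduction and matching the per-trajectory moments to the $\tau^{(1)}$ notation in the statement.
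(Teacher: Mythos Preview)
Your proposal is correct and follows essentially the same approach as the paper: reduce to a single-trajectory identity via i.i.d., invoke the lemma $\E_{\tau\sim\pi_b}[\rho_{\theta'}^\complement(\tau)]=1$, and finish with the same moment algebra. Your hybrid-policy justification of the lemma is exactly the paper's Appendix~A computation phrased more conceptually---the paper writes out the integral and observes that the integrand $\prod_t \pi_e(a_t\given s_t)^{1-\theta'(s_t)}\pi_b(a_t\given s_t)^{\theta'(s_t)}P(s_{t+1}\given s_t,a_t)P(s_1)$ is a valid trajectory density, which is precisely $P(\tau;\pi_h)$ in your notation.
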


The derivation is in Appendix~\ref{appendix-sec:OSIRIS-variance-pf}. Term (\ref{eqn:OSIRIS-variance-centers}) adjusts for the locations of the respective estimator distributions. It is generally much smaller in magnitude than all other terms, which is plausible by the Cauchy-Schwarz inequality.\footnote{Terms~(\ref{eqn:OSIRIS-variance-var}) and (\ref{eqn:OSIRIS-variance-cov}) are of the form $\E[x^2]$ while the terms in (\ref{eqn:OSIRIS-variance-centers}) are of the form $\E[x]^2$.} Term (\ref{eqn:OSIRIS-variance-var}) represents the variance of the omitted likelihood ratios. This term cannot be positive, so it can only act to decrease variance. The key conclusion is about Term~(\ref{eqn:OSIRIS-variance-cov}), which mirrors\footnote{By definition, the variance involves \textit{second} moments, so the variables in Term~(\ref{eqn:OSIRIS-variance-cov}) are squared but not in Equation~\ref{eqn:OSIRIS-mean}.} the bias term:

\begin{theorem} \label{thm:OSIRIS-bias}
    Given any mapping $\theta' : \mathcal{S} \to \{0,1\}$, the mean of the OSIRIS estimator is:
    \begin{multline} \label{eqn:OSIRIS-mean}
        \E_{\D\sim\pi_b}[\Vhat_\OSIRIS^{\pi_e}(\D; \theta')] = \\
            V^{\pi_e}
            -
            \Cov_{\D\sim\pi_b}[ \Vhat_\OSIRIS^{\pi_e}(\tau^{(1)}; \theta'), \, \rho_{\theta'}^\complement(\tau^{(1)})]
    \end{multline}
\end{theorem}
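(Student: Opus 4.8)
The plan is to reduce the dataset-level expectation to a single-trajectory expectation and then read off the bias from the definition of covariance. Since the trajectories in $\D$ are sampled i.i.d.\ and $\Vhat_\OSIRIS^{\pi_e}$ is a sample average, linearity of expectation gives $\E_{\D\sim\pi_b}[\Vhat_\OSIRIS^{\pi_e}(\D;\theta')] = \E_{\tau\sim\pi_b}[g(\tau)\,\rho_{\theta'}(\tau)]$, so it suffices to analyze the single-trajectory quantity, and the covariance over $\tau^{(1)}$ in Equation~(\ref{eqn:OSIRIS-mean}) is just a covariance over $\tau\sim\pi_b$.

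Next I would set $X \equiv \Vhat_\OSIRIS^{\pi_e}(\tau;\theta') = g(\tau)\,\rho_{\theta'}(\tau)$ and $Y \equiv \rho_{\theta'}^\complement(\tau)$ and invoke the identity $\E[XY] = \E[X]\,\E[Y] + \Cov[X,Y]$. The product $XY$ recombines to the ordinary single-trajectory IS estimator: because the exponents $\theta'(s_t)$ and $1-\theta'(s_t)$ sum to $1$, we have $\rho_{\theta'}(\tau)\,\rho_{\theta'}^\complement(\tau) = \prod_{t=1}^T \rho_t(\tau) = \rho(\tau)$, so $XY = g(\tau)\,\rho(\tau) = \Vhat_\IS^{\pi_e}(\tau)$. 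By the unbiasedness of ordinary IS this yields $\E_{\tau\sim\pi_b}[XY] = V^{\pi_e}$.

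The crux of the argument, and the step I expect to be the main obstacle, is establishing $\E_{\tau\sim\pi_b}[Y] = \E_{\tau\sim\pi_b}[\rho_{\theta'}^\complement(\tau)] = 1$, since only then does the identity collapse to the claimed form. I would prove this by a change-of-measure argument. Writing $P(\tau;\pi_b)\,\rho_{\theta'}^\complement(\tau)$ explicitly and absorbing each surviving likelihood ratio into the trajectory probability, the factor $\pi_b(a_t\given s_t)$ is replaced by $\pi_e(a_t\given s_t)$ exactly at the time steps where $\theta'(s_t)=0$, while the steps with $\theta'(s_t)=1$ retain $\pi_b(a_t\given s_t)$. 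Because $\theta'$ is a function of $s_t$ alone, which is realized before $a_t$ is drawn, this defines a legitimate hybrid policy $\tilde\pi(\cdot\given s) \equiv \pi_e(\cdot\given s)$ when $\theta'(s)=0$ and $\tilde\pi(\cdot\given s)\equiv \pi_b(\cdot\given s)$ otherwise. Hence $P(\tau;\pi_b)\,\rho_{\theta'}^\complement(\tau) = P(\tau;\tilde\pi)$, and summing (integrating) over all trajectories normalizes to one. The care required here is to confirm that the dependence of $\theta'(s_t)$ only on the pre-action state makes $\tilde\pi$ a valid state-dependent policy, so that its trajectory distribution integrates to $1$.

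Finally, substituting $\E[XY] = V^{\pi_e}$ and $\E[Y] = 1$ into $\E[XY] = \E[X]\,\E[Y] + \Cov[X,Y]$ and rearranging gives $\E[X] = V^{\pi_e} - \Cov_{\tau\sim\pi_b}[\Vhat_\OSIRIS^{\pi_e}(\tau;\theta'),\,\rho_{\theta'}^\complement(\tau)]$, which is exactly Equation~(\ref{eqn:OSIRIS-mean}).
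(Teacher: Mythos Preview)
Your proposal is correct and mirrors the paper's argument essentially step for step: the paper also reduces to a single trajectory, writes $\Vhat_\IS^{\pi_e}(\tau)=\Vhat_\OSIRIS^{\pi_e}(\tau;\theta')\cdot\rho_{\theta'}^\complement(\tau)$, applies the covariance identity, and invokes $\E_{\tau\sim\pi_b}[\rho_{\theta'}^\complement(\tau)]=1$, which it proves separately by exactly the change-of-measure/hybrid-policy computation you outline.
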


The derivation is in Appendix~\ref{appendix-sec:OSIRIS-mean-pf}. If the covariance between $\Vhat_\OSIRIS^{\pi_e}(\tau;\theta')$ and $\rho_{\theta'}^\complement(\tau)$ is zero, then the estimator is unbiased by Theorem~\ref{thm:OSIRIS-bias} and likely has reduced variance by Theorem~\ref{thm:OSIRIS-variance}. This observation suggests an algorithm that chooses $\theta'$ to minimize this covariance term.

However, accurate estimation of the covariance term is challenging: high-variance estimates of the covariance can introduce large variance to the OPE estimator by outputting drastically different $\theta'$ per sample, which would increase length variance. Accurate estimation of the covariance term is further complicated by independence requirements that restrict the usable data. First, Equations~\ref{eqn:OSIRIS-variance-cov} and \ref{eqn:OSIRIS-mean} say the covariance term should be estimated over i.i.d. data sets $\D$, each of which has its own $\theta'$; but in practice, any calculation of $\theta'$ will probably incorporate all data in a single sample of $\D$. Furthermore, Theorems~\ref{thm:OSIRIS-variance} and \ref{thm:OSIRIS-bias} use the fact that $\E_{\D\sim\pi_b}[\rho_{\theta'}^\complement(\tau^{(1)})] = 1$, which is generally only true if $\theta'$ is calculated using $\D \setminus \{\tau^{(1)}\}$ (Appendix~\ref{appendix-sec:OSIRIS-expect-pf}).

Towards getting around these problems associated with picking $\theta'$, we further interpret the covariance term between $\Vhat_\OSIRIS^{\pi_e}(\tau;\theta')$ and $\rho_{\theta'}^\complement(\tau)$ as a measure of the statistical dependence between the trajectory outcome under $\pi_e$ and the product of omitted likelihood ratios, respectively. As discussed in Section~\ref{sec:variance-sources}, the product of likelihood ratios can represent both the number of transitions included and how well those transitions follow $\pi_e$. Assuming that trajectory length is not inherently meaningful to trajectory outcome, the information contained in $\rho_{\theta'}^\complement(\tau)$ is then primarily about the actions taken during the trajectory. Indeed, the covariance term involving the \textit{product} of likelihood ratios is the sum of covariances involving each \textit{individual} likelihood ratio (Appendix~\ref{appendix-sec:cov-indivlrs}). This decomposition suggests omitting individual likelihood ratios that are independent of the kept terms in the IS estimator. We refine this idea towards a practical estimator algorithm in the next section.

\section{State Relevance}
\label{sec:state-indep}

We have motivated the idea to reduce IS variance without introducing bias by omitting individual likelihood ratios where there is no statistical dependence between the action taken and the trajectory outcome. Now we present another perspective of this idea, suggesting a practical algorithm that circumvents the challenges associated with directly estimating the covariance term.

We want to calculate $\theta$ to measure the dependence of the trajectory outcome on each individual action taken. Because the actions are sampled from policies that are conditioned on states, we assume that there are some states where the action taken does not matter to the trajectory outcome. We formalize this idea as the relevance of a state:
\begin{definition} \label{def:state-relevance}
    A state $s\in\mathcal{S}$ is irrelevant if
    \begin{equation} \label{eqn:irrelevance-transition}
        \E_{\tau\sim\pi_e}[g_{t:T}(\tau)\given s_{t}=s,\, a_{t}=a] = \mathrm{constant}
        , \quad
        \forall a \in \mathcal{A}.
    \end{equation}
    Otherwise, $s$ is relevant. Using this condition, we define the true relevance mapping $\theta : \mathcal{S} \to \{0,1\}$ where $\theta(s)=0$ if $s$ is irrelevant, and $\theta(s)=1$ if $s$ is relevant.\footnote{If Equation~\ref{eqn:irrelevance-transition} is true for some $t$, then it must be true for all $t$ by the MDP setup.}
\end{definition}

In other words, a state is irrelevant if the average return-to-go is not affected by the action taken in that state. For example, if state $s^A$ always takes the agent to $s^B$ and no reward is given for that transition, then $s^A$ is considered irrelevant. Or if paths diverge out of $s^A$ but later converge to $s^B$ and no reward is given for those transitions, then $s^A$ is still considered irrelevant. $s^A$ can even be irrelevant if non-zero rewards are given, as long as the expected value of the return-to-go (aka state-action value function $Q^{\pi_e}(s,a)\equiv\E_{\tau\sim\pi_e}[g_{t:T}(\tau)\given s_{t}=s,\, a_{t}=a]$) remains the same regardless of the action taken in $s^A$. Likelihood ratios corresponding to all transitions between $s^A$ and $s^B$ still get multiplied in the ordinary IS weight. But these likelihood ratios have no effect in correcting the difference between $\pi_e$ and $\pi_b$, so they just add meaningless variance to the IS estimator.

Definition~\ref{def:state-relevance} tells us that in these irrelevant states, the trajectory outcome is unaffected by the action taken. As such, we can pretend that in irrelevant states, the evaluation policy will actually draw actions from $\pi_b$  rather than $\pi_e$. This idea is formalized in:

\begin{lemma} \label{lem:composite-policy}
    Let $\pi_e'$ be a composite policy:
    \begin{equation}
        \pi_e'(a\given s; \theta) \equiv \begin{cases}
            \pi_e(a\given s) & \text{if } \theta(s)=1 \text{ (relevant)} \\
            \pi_b(a\given s) & \text{if } \theta(s)=0\text{ (irrelevant)}
        \end{cases}
    \end{equation}
    Then the policy values of $\pi_e$ and $\pi_e'$ are equal:
    \begin{equation}
        V^{\pi_e}=V^{\pi_e'}
    \end{equation}
\end{lemma}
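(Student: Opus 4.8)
The plan is to prove the stronger, state-wise statement that the expected return-to-go is identical under $\pi_e$ and $\pi_e'$ from \emph{every} state, and then average over the initial-state distribution $P(s_1)$ to recover $V^{\pi_e}=V^{\pi_e'}$. The engine of the argument is a policy-switching (telescoping) construction: for each $k\ge 0$ define a hybrid policy $\pi^{(k)}$ that acts according to $\pi_e'$ at time steps $t\le k$ and according to $\pi_e$ at time steps $t>k$, so that $\pi^{(0)}=\pi_e$ and, as $k$ grows, $\pi^{(k)}$ approaches $\pi_e'$. I would show that a single switch leaves the value unchanged, $V^{\pi^{(k)}}=V^{\pi^{(k-1)}}$, and then chain these equalities together.

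To establish $V^{\pi^{(k)}}=V^{\pi^{(k-1)}}$, observe that the two hybrids induce identical distributions over the prefix $(s_1,a_1,\ldots,a_{k-1},s_k)$, since both follow $\pi_e'$ for $t<k$, and identical continuations once the action $a_k$ is fixed, since both follow $\pi_e$ for $t>k$; the only discrepancy is the action distribution used at step $k$. Writing $g(\tau)=g_{1:k-1}(\tau)+\gamma^{k-1}g_{k:T}(\tau)$, the prefix term and the distribution of $s_k$ are shared, so the two values differ only through the conditional quantity $\E_{a_k\sim\pi(\cdot\given s_k)}[Q^{\pi_e}(s_k,a_k)]$, where I use that the post-$k$ continuation is pure $\pi_e$ and hence has expected return-to-go exactly $Q^{\pi_e}(s_k,a_k)\equiv\E_{\tau\sim\pi_e}[g_{k:T}(\tau)\given s_k,a_k]$. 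If $s_k$ is relevant, then $\pi_e'(\cdot\given s_k)=\pi_e(\cdot\given s_k)$ by definition of the composite policy, so the two expectations coincide; if $s_k$ is irrelevant, then Definition~\ref{def:state-relevance} makes $Q^{\pi_e}(s_k,a)$ constant in $a$, so $\E_{a_k}[Q^{\pi_e}(s_k,a_k)]$ is unchanged whether we average over $\pi_e(\cdot\given s_k)$ or over $\pi_b(\cdot\given s_k)=\pi_e'(\cdot\given s_k)$. Either way the two values match.

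The feature that makes this route clean is that every switch is evaluated against a \emph{pure}-$\pi_e$ continuation, so the constancy of $Q^{\pi_e}$ supplied directly by Definition~\ref{def:state-relevance} applies immediately, without my having to argue separately that the composite-policy action-value is also flat in irrelevant states. An alternative is backward induction on the value functions, proving $V^{\pi_e}(s)=V^{\pi_e'}(s)$ for all $s$ through the Bellman equations; there the inductive hypothesis is precisely what upgrades the statement from $Q^{\pi_e}$ to $Q^{\pi_e'}$, but it demands a well-founded ordering on states.

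I expect the main obstacle to be the book-keeping around variable and potentially unbounded trajectory length. The telescoping chain $V^{\pi_e}=V^{\pi^{(0)}}=V^{\pi^{(1)}}=\cdots$ terminates immediately if every trajectory reaches a terminal state within a fixed horizon $H$, since then $\pi^{(H)}=\pi_e'$; otherwise I must justify $\lim_{k\to\infty}V^{\pi^{(k)}}=V^{\pi_e'}$, which follows from the discount factor $\gamma$ together with bounded rewards, or from almost-sure termination. I would also verify carefully that the time-$k$ state distributions of consecutive hybrids genuinely agree (exactly the fact that they share identical pre-$k$ dynamics), and confirm that $Q^{\pi_e}$ is well-defined independently of $t$, as asserted by the footnote to Definition~\ref{def:state-relevance}.
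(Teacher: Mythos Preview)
Your telescoping/hybrid-policy argument is correct, but it is a genuinely different route from the paper's. The paper proves the state-wise identity $V^{\pi_e}(s)=V^{\pi_e'}(s)$ directly: it first observes (exactly as you do) that
\[
\E_{a\sim\pi_e(\cdot\given s)}\big[Q^{\pi_e}(s,a)\big]
=
\E_{a\sim\pi_e'(\cdot\given s)}\big[Q^{\pi_e}(s,a)\big]
\]
for every $s$, and then invokes the recursive Bellman equation to propagate this through the state space, finishing by averaging over $P(s_1)$. In other words, the paper takes precisely the ``alternative'' you mention in passing---backward induction / a Bellman fixed-point argument---rather than your forward-in-time switching of one step at a time.

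The trade-offs are the ones you already identify. Your construction keeps the post-switch continuation pure $\pi_e$, so the constancy of $Q^{\pi_e}(s,\cdot)$ from Definition~\ref{def:state-relevance} applies directly at each step without ever touching $Q^{\pi_e'}$; the price is the limit $k\to\infty$, which you correctly note needs discounting plus bounded rewards or almost-sure termination. The paper's Bellman route avoids non-stationary hybrids and the limit argument, but its recursive step (``applying this logic with the Bellman equation'') is stated informally and, as you point out, implicitly relies on either a well-founded ordering or uniqueness of the Bellman fixed point to make the induction go through. Both arguments are sound; yours is closer in spirit to a performance-difference lemma, the paper's to a value-function identity.
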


The proof is in Appendix~\ref{appendix-sec:pf-composite-policy-lem}. Performing importance sampling while treating $\pi_e'$ as the evaluation policy is equivalent to setting the importance sampling likelihood ratios to $\frac{\pi_b(a\given s)}{\pi_b(a\given s)} = 1$ wherever $s$ is irrelevant. This perspective is exactly the likelihood ratio omission strategy in OSIRIS. Thus, the OSIRIS estimator using the true state relevance mapping is unbiased:
\begin{theorem}
    \label{thm:OSIRIS-bias-ideal}
    Given the true relevance mapping $\theta$, the mean of the OSIRIS estimator is
    \begin{equation}
        \E_{\D\sim\pi_b}[\Vhat_{\OSIRIS}^{\pi_e}(\D; \theta)] = V^{\pi_e}
    \end{equation}
\end{theorem}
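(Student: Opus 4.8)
The plan is to reduce Theorem~\ref{thm:OSIRIS-bias-ideal} to the general bias formula already established in Theorem~\ref{thm:OSIRIS-bias}, and then show that the covariance term vanishes when $\theta$ is the \emph{true} relevance mapping. By Theorem~\ref{thm:OSIRIS-bias},
\begin{equation*}
    \E_{\D\sim\pi_b}[\Vhat_\OSIRIS^{\pi_e}(\D; \theta)]
    = V^{\pi_e}
    - \Cov_{\D\sim\pi_b}[ \Vhat_\OSIRIS^{\pi_e}(\tau^{(1)}; \theta),\, \rho_{\theta}^\complement(\tau^{(1)})],
\end{equation*}
so it suffices to prove the covariance is zero. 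I would not attack the covariance directly; instead I would exploit Lemma~\ref{lem:composite-policy}, which tells us that omitting likelihood ratios at irrelevant states is exactly importance sampling against the composite policy $\pi_e'$.

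The key steps, in order, are as follows. First, I would observe that $\rho_\theta(\tau)$ is precisely the importance weight $\prod_t \pi_e'(a_t\given s_t)/\pi_b(a_t\given s_t)$ of the composite policy $\pi_e'$ relative to $\pi_b$: at relevant states $\theta(s_t)=1$ the factor is $\pi_e(a_t\given s_t)/\pi_b(a_t\given s_t)$, and at irrelevant states $\theta(s_t)=0$ the omitted factor equals $\pi_b(a_t\given s_t)/\pi_b(a_t\given s_t)=1 = \pi_e'(a_t\given s_t)/\pi_b(a_t\given s_t)$. Hence $\Vhat_\OSIRIS^{\pi_e}(\D;\theta)$ is exactly the ordinary IS estimator of $\pi_e'$ using behavior data from $\pi_b$. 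Second, I would invoke the standard unbiasedness of ordinary IS: since $\pi_e'$ shares support assumptions with $\pi_b$, we have $\E_{\D\sim\pi_b}[\Vhat_\OSIRIS^{\pi_e}(\D;\theta)] = V^{\pi_e'}$. Third, I would apply Lemma~\ref{lem:composite-policy} to conclude $V^{\pi_e'} = V^{\pi_e}$, giving the result directly.

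Concretely, the cleanest route is a change-of-measure computation: write
\begin{equation*}
    \E_{\tau\sim\pi_b}[g(\tau)\,\rho_\theta(\tau)]
    = \E_{\tau\sim\pi_b}\Big[g(\tau)\prod_{t=1}^{T}\frac{\pi_e'(a_t\given s_t)}{\pi_b(a_t\given s_t)}\Big]
    = \E_{\tau\sim\pi_e'}[g(\tau)] = V^{\pi_e'},
\end{equation*}
where the middle equality absorbs each ratio into the trajectory distribution $P(\tau;\pi_b)$ to obtain $P(\tau;\pi_e')$, and the last equality is the definition of $V^{\pi_e'}$. This shows the single-trajectory estimator is unbiased for $V^{\pi_e'}$, and averaging over the i.i.d.\ batch $\D$ preserves this, so $\E_{\D\sim\pi_b}[\Vhat_\OSIRIS^{\pi_e}(\D;\theta)] = V^{\pi_e'} = V^{\pi_e}$ by Lemma~\ref{lem:composite-policy}.

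The main obstacle, and the only place requiring genuine care, is justifying the identification of $\rho_\theta$ with the composite-policy importance weight at the level of the \emph{denominators}: the argument needs $\pi_b(a\given s) > 0$ wherever $\pi_e(a\given s) > 0$ so that every likelihood ratio is well defined, and it implicitly relies on $\theta$ being a deterministic function of the state (not of the trajectory or the data) so that the change of measure factorizes cleanly across time steps and $\pi_e'$ is a genuine Markov policy. Given the true relevance mapping $\theta:\mathcal{S}\to\{0,1\}$ from Definition~\ref{def:state-relevance} these conditions hold, so the telescoping of ratios into the composite trajectory measure goes through without the covariance complications that arise for data-estimated $\theta$ discussed after Theorem~\ref{thm:OSIRIS-bias}.
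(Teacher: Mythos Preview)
Your proposal is correct and takes essentially the same approach as the paper: identify $\rho_\theta(\tau)$ as the ordinary IS weight for the composite policy $\pi_e'$, perform the change of measure to obtain $\E_{\tau\sim\pi_b}[g(\tau)\rho_\theta(\tau)] = V^{\pi_e'}$, and then invoke Lemma~\ref{lem:composite-policy} to conclude $V^{\pi_e'}=V^{\pi_e}$. The initial detour through Theorem~\ref{thm:OSIRIS-bias} is unnecessary since you immediately abandon the covariance route in favor of the direct change-of-measure argument, which is exactly what the paper does.
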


The proof is in Appendix~\ref{appendix-sec:unbiased-ideal}, where we also show that the estimator remains unbiased if we keep irrelevant likelihood ratios, but it is biased if we omit relevant likelihood ratios.

\paragraph{Implementation}

A key assumption of our analysis so far is that we have access to the true relevance mapping $\theta$, but practically, we will almost always need to estimate relevance from the historical data. Definition~\ref{def:state-relevance} naturally suggests a class of algorithms to do so by estimating the state-action value function $Q^{\pi_e}(s,a)$ and then comparing the estimate $\hat{Q}^{\pi_e}(s,a)$ for different actions $a$ within state $s$. If $\hat{Q}^{\pi_e}(s,a)$ is (approximately\footnote{Inevitably, $\hat{Q}^{\pi_e}$ will be an imperfect estimator. But this error could be advantageous because it effectively enforces a softer version of Definition~\ref{def:state-relevance}, which may allow the estimator to also ignore states that are ``mostly'' irrelevant.}) constant for all actions $a$, then the algorithm should consider $s$ irrelevant. Otherwise $s$ should be relevant.

We will focus on presenting one possible implementation, summarized in Algorithm~\ref{alg:estimate-relevance}, that uses IS with a statistical test because it is simple yet effective and can guarantee estimator consistency. For each visit to state $s$ in the historical data, we pre-calculate the associated return-to-go $g_{t:T}(\tau^{(n)})$ multiplied by the IS weight-to-go $\rho_{t:T}(\tau^{(n)})$. This is an unbiased estimate of the expected return-to-go under the evaluation policy $\pi_e$ after taking action $a$ in state $s$ (aka $Q^{\pi_e}(s,a)$). Although these samples likely come from different times $t$, the MDP setup says they still come from the same distribution conditioned on $s$. Thus, for a given $s$, we collect these estimates together for all visits to $s$ into either the list $\mathcal{G}_+$ or $\mathcal{G}_-$ depending on whether the likelihood ratio corresponding to the transition is $>1$ or $\leq 1$, respectively. This procedure effectively produces a binary action space, which allows us to use popular two-sample statistical tests\footnote{If Smirnov's non-parametric test \citepmain{Hodges1958} is used instead, we observe the same qualitative trends as our reported results (Appendix~\ref{appendix-sec:alternative-implementations}).} to compare $\mathcal{G}_+$ and $\mathcal{G}_-$. In particular, we use Welch's $t$-test \citepmain{WelchTTest} where the null hypothesis is Equation~\ref{eqn:irrelevance-transition} with $|\mathcal{A}|=2$. Because the statistical test assumes the individual samples in $\mathcal{G}_+$ and $\mathcal{G}_-$ are i.i.d., we need to assume that state $s$ is visited at most once per trajectory, which is otherwise sampled i.i.d.

\begin{algorithm}[tb]
   \caption{Estimating state relevance $\thetahat(s;\D)$}
   \label{alg:estimate-relevance}
\begin{algorithmic}
   \STATE {\bfseries Input:} state $s$, data $\D$, significance level $\alpha$
   \STATE Initialize $\mathcal{G}_+\gets \emptyset$ and $\mathcal{G}_-\gets \emptyset$.
   \FOR{$n=1$ {\bfseries to} $N$ {\bfseries and} $t=1$ {\bfseries to} $T$}
   \IF{$s_t^{(n)} = s$}
   \IF{$\rho_t(\tau^{(n)}) > 1$}
   \STATE Append $\mathcal{G}_+ \gets \mathcal{G}_+ \cup \{g_{t:T}(\tau^{(n)}) \rho_{t:T}(\tau^{(n)})\}$
   \ELSE
   \STATE Append $\mathcal{G}_- \gets \mathcal{G}_- \cup \{g_{t:T}(\tau^{(n)}) \rho_{t:T}(\tau^{(n)})\}$
   \ENDIF
   \ENDIF
   \ENDFOR
   \STATE Perform \citeauthor{WelchTTest}'s two-sample $t$-test comparing the samples $\mathcal{G}_{+}$ and $\mathcal{G}_{-}$ with significance level $\alpha$
   \IF{null hypothesis is rejected}
   \STATE {\bfseries Output:} $\hat{\theta}(s; \D)\equiv 1$ (relevant)
   \ELSE
   \STATE {\bfseries Output:} $\hat{\theta}(s; \D)\equiv 0$ (irrelevant)
   \ENDIF
\end{algorithmic}
\end{algorithm}

By using Equation~\ref{eqn:irrelevance-transition} as the null hypothesis, this state relevance estimation procedure assumes that states are irrelevant until ``proven'' relevant. Now, we will characterize the effect of this property on accuracy.

Incorrectly identifying a truly irrelevant state as relevant occurs with probability $\leq \alpha$, resulting from a type I error of the statistical test. This error will not affect the bias of the OSIRIS estimator because the likelihood ratios corresponding to truly irrelevant states can take any value without introducing bias (Appendix~\ref{appendix-sec:unbiased-ideal}). Although Term~(\ref{eqn:OSIRIS-variance-cov}) of the variance should also be unaffected by this error because it mirrors the bias term, Term~(\ref{eqn:OSIRIS-variance-var}) will likely increase variance because fewer likelihood ratios are omitted (Proposition~\ref{prop:variance-length}).

Meanwhile, the type II error, of incorrectly identifying a truly relevant state as irrelevant, introduces bias (Appendix~\ref{appendix-sec:unbiased-ideal}). But this bias goes to zero as data size increases because the statistical test is consistent:
\begin{theorem} \label{thm:osiris-consistent}
    If $|\mathcal{A}|=2$ and $\alpha > 0$, then as $|\D|\to\infty$
    \begin{equation}
        \E_{\D\sim\pi_b}[\Vhat_\OSIRIS^{\pi_e}(\D; \thetahat(\,\cdot\,; \D))] = V^{\pi_e}
    \end{equation}
\end{theorem}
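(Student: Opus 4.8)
The plan is to show that the only channel through which $\thetahat$ can bias the estimator is a \emph{type~II} error---classifying a genuinely relevant state as irrelevant---and that this channel closes as $|\D|\to\infty$. The starting point is a mild strengthening of Lemma~\ref{lem:composite-policy}: for \emph{any fixed} mapping $\theta'$ with $\theta'(s)=1$ at every relevant $s$ (it may also keep an arbitrary subset of the irrelevant ones), the weight $\rho_{\theta'}(\tau)=\prod_t[\rho_t(\tau)]^{\theta'(s_t)}$ is exactly the likelihood ratio of the composite policy $\pi_e'(\cdot;\theta')$ against $\pi_b$, and since $\pi_e'$ substitutes $\pi_b$ only at irrelevant states, the argument of Appendix~\ref{appendix-sec:unbiased-ideal} (keeping irrelevant ratios preserves unbiasedness) gives $V^{\pi_e'}=V^{\pi_e}$, hence $\E_{\tau\sim\pi_b}[g(\tau)\rho_{\theta'}(\tau)]=V^{\pi_e}$. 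Thus any fixed mapping that dominates the true $\theta$ yields an unbiased single-trajectory estimator, and the relevant ``good event'' is $E_N\equiv\{\thetahat(s;\D)=1$ for every relevant $s$ with positive visitation probability under $\pi_b\}$, i.e.\ the event that no type~II error occurs on a state that can appear in the data (states unreachable under $\pi_b$ never enter $\rho_{\thetahat}(\tau)$ for $\tau\sim\pi_b$ and are covered by the standing IS overlap assumption).

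First I would establish $P(E_N)\to1$. Since $\mathcal{S}$ is finite there are finitely many such relevant states; fix one, $s$. Under the assumptions that $s$ is visited at most once per trajectory and that trajectories are i.i.d., the number of visits to $s$ is $\mathrm{Binomial}(N,p_s)$ with $p_s>0$, so $|\mathcal{G}_+|$ and $|\mathcal{G}_-|$ grow without bound. Here $|\mathcal{A}|=2$ is essential: the split $\rho_t(\tau)\gtrless1$ coincides with the two actions, so the group sample means converge (LLN) to the unbiased targets $Q^{\pi_e}(s,a_1)$ and $Q^{\pi_e}(s,a_2)$, which are \emph{unequal} precisely because $s$ is relevant (Definition~\ref{def:state-relevance}, Equation~\ref{eqn:irrelevance-transition}). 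Welch's $t$-test is consistent against such a fixed separation of means: for any fixed $\alpha>0$ the critical value stays bounded while the statistic diverges, so the null is rejected with probability tending to $1$ and the type~II probability at $s$ vanishes. A union bound over the finitely many relevant states gives $P(E_N^c)\to0$.

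Next I would assemble the bias. Writing $\Vhat\equiv\Vhat_\OSIRIS^{\pi_e}(\D;\thetahat(\cdot;\D))$ and decomposing $\E[\Vhat]=\E[\Vhat\,\mathbb{1}_{E_N}]+\E[\Vhat\,\mathbb{1}_{E_N^c}]$, the bad-event term is $o(1)$ by Cauchy--Schwarz, $|\E[\Vhat\,\mathbb{1}_{E_N^c}]|\le(\E[\Vhat^2])^{1/2}P(E_N^c)^{1/2}$, provided $\E[\Vhat^2]$ is bounded uniformly in $N$ (ensured by bounded rewards, finite horizon, and overlap $\pi_b>0$, which also keeps the $\mathcal{G}_\pm$ variances finite for the $t$-test). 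On $E_N$ every relevant ratio is kept, so the realized mapping dominates $\theta$ and the fixed-mapping fact above would give unbiasedness---were $\thetahat$ not computed from the very data it is applied to.

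This data dependence is the \textbf{main obstacle}, and it is exactly the leave-one-out subtlety flagged after Theorem~\ref{thm:OSIRIS-bias}. I would resolve it by coupling: on the $n$-th trajectory replace $\thetahat(\cdot;\D)$ by the mapping $\thetahat_{-n}$ computed from $\D\setminus\{\tau^{(n)}\}$, which is independent of $\tau^{(n)}$; conditioning on $\thetahat_{-n}$ then makes each term exactly $V^{\pi_e}$ on the (leave-one-out) good event and $o(1)$ off it, so the leave-one-out estimator is asymptotically unbiased. It remains to bound the coupling error $\tfrac1N\sum_n g(\tau^{(n)})\big(\rho_{\thetahat}(\tau^{(n)})-\rho_{\thetahat_{-n}}(\tau^{(n)})\big)$: the two mappings can differ on a visited state only if adding one of $\Theta(N)$ samples flips a test decision, whose probability tends to $0$ (the statistic has a continuous limiting law, so it falls within the $O(N^{-1/2})$ boundary layer with vanishing probability, and at relevant states the statistic diverges away from the threshold). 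The same uniform second-moment bound then sends $\E$ of this error to $0$, and combining the three $o(1)$ contributions yields $\lim_{N\to\infty}\E[\Vhat]=V^{\pi_e}$. The careful verification of this coupling bound and of the uniform integrability is where most of the work lies.
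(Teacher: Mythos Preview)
Your core strategy matches the paper's: consistency of Welch's $t$-test drives the type~II error at every relevant state to zero, and since keeping extra irrelevant likelihood ratios never introduces bias (the remark following Appendix~\ref{appendix-sec:unbiased-ideal}), the limiting estimator is unbiased. The paper's own proof is a three-sentence sketch that asserts exactly this and stops. You go substantially further in rigor: you make the good-event/bad-event decomposition explicit, control the bad event via Cauchy--Schwarz and a uniform second-moment bound, and---most significantly---you confront the data-dependence issue (that $\thetahat$ is computed from the same $\D$ it is applied to) with a leave-one-out coupling and a stability argument for the test under single-sample perturbation. The paper flags this dependence after Theorem~\ref{thm:OSIRIS-bias} but silently drops it in its proof, so your treatment is more careful than the paper's own.

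One small slip worth noting: conditioned on $(s_t,a_t)=(s,a)$, the sample $g_{t:T}\rho_{t:T}$ has mean $\rho_t(s,a)\,Q^{\pi_e}(s,a)$ rather than $Q^{\pi_e}(s,a)$, since $\rho_{t:T}$ includes the time-$t$ ratio. The paper makes the same claim in describing Algorithm~\ref{alg:estimate-relevance}, so this is a shared oversight; it means test power at a relevant $s$ actually hinges on $\rho_1 Q_1\neq\rho_2 Q_2$ rather than $Q_1\neq Q_2$, a condition that can in principle fail but is a gap in the paper's statement as much as in your proof of it.
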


The proof is in Appendix~\ref{appendix-sec:pf-consistent}. The purpose of the assumption $|\mathcal{A}|=2$ is to account for the fact that our binary classification (for practical implementation of the statistical test) loses information about the actions. Generally, the action space will be larger than 2, but we expect the estimator to still be consistent if the action space is partitioned in a way that that preserves any dependence between actions and return-to-go. We proposed classifying actions based on their likelihood ratios, which is reasonable because following vs deviating from the evaluation policy could cause drastic differences in trajectory outcome. For example, in a policy improvement setting where the policies are $\epsilon$-greedy, following the optimal action would lead to higher rewards than a random action. Furthermore, our partitioning strategy is based on the likelihood ratios, which are the entities that directly appear in the covariance term in Equations~\ref{eqn:OSIRIS-variance} and \ref{eqn:OSIRIS-mean}. In Appendix~\ref{appendix-sec:alternative-implementations}, we explored another partition strategy, which gave the same qualitative trends as our reported results.

Altogether, because $\alpha$ trades off the probabilities of type I and II error, it can also be seen as a parameter that trades off OSIRIS bias and variance by mixing two OPE approaches: a naive average of behavior trajectory returns ($\alpha=0$ i.e. all likelihood ratios are irrelevant) and an unbiased but high-variance IS estimator ($\alpha=1$ i.e. all likelihood ratios are relevant).\footnote{Interestingly, the jump in the estimator's behavior from $\alpha=0$ to $\alpha>0$ is non-continuous because it involves the first introduction of an IS likelihood ratio.} This perspective also points out that for smaller data size, Algorithm~\ref{alg:estimate-relevance} will more aggressively label states as irrelevant. Equivalently, it prioritizes variance reduction for smaller data size, which is reasonable because variance is inversely related to data size (Theorem~\ref{thm:OSIRIS-variance}).

\paragraph{Alternative Implementations}
Clearly, the estimation in Algorithm~\ref{alg:estimate-relevance} will be more accurate if the lists $\mathcal{G}_+,\mathcal{G}_-$ contain more data, so it effectively requires a discrete state space. Nonetheless, the method can easily be extended to settings with continuous state spaces: in Section~\ref{sec:results}, we show the method still works if we use discretized states to perform the statistical test and use the original continuous states for all other calculations in the estimator.

In Appendix~\ref{appendix-sec:alternative-implementations}, we present empirical results for other possible implementations. Notably, we also tried directly estimating $\hat{Q}^{\pi_e}$ with a neural network model, which can certainly handle comparisons within continuous states and across more than two actions. This approach produces the same qualitative trends as our reported results, which demonstrates the extent to which our analysis is robust to specific implementation choices. An important conclusion from these results is that simpler implementations are generally advantageous because they involve fewer degrees of freedom that can vary across datasets and thus introduce less estimator variance. At the same time, large variance in the estimate of $Q^{\pi_e}$ should be limited in its effect on our overall OSIRIS estimator because $\hat{Q}^{\pi_e}$ is only used for comparisons. Furthermore, the binary output of the comparison $\thetahat(s;\D)\in\{0,1\}$ further weakens any statistical dependence between $\hat{\theta}(s;\D)$ and each individual trajectory in the dataset, which could violate the independence assumptions discussed in Section~\ref{sec:general-omission}.

\paragraph{Extensions}

From the perspective presented in Lemma~\ref{lem:composite-policy}, the OSIRIS procedure is equivalent to performing ordinary IS while treating the composite policy $\pi_e'$ as the evaluation policy, and doing so does not introduce any new bias. Thus, it is natural to use variants of OSIRIS corresponding to any variant of IS \citepmain{thomas2016data,thomas2015safe,jiang2016doubly}. For example, analogous to WIS, we provide empirical results in Section~\ref{sec:results} for OSIRWIS:
\begin{equation} \label{eqn:OSIRWIS-def}
    \Vhat_{\mathrm{OSIRWIS}}^{\pi_e}(\D; \theta') \equiv \frac{\sum_{\tau\in\D}
    g(\tau)
    \,
    \rho_{\theta'}(\tau; \theta')}{\sum_{\tau\in\D}\rho_{\theta'}(\tau)}.
\end{equation}
This principle also directly extends to a step-wise IS framework \citepmain{precup2000eligibility,jiang2016doubly}. Alternatively, because step-wise IS is fundamentally estimating the expected \textit{reward} collected at each time step, it presents an interesting opportunity to define and use the relevance of a state to the reward $\Delta t\in\mathbb{Z}$ time steps away (rather than to the overall trajectory outcome):
\begin{theorem} \label{thm:stepwise}
    Let state $s\in\mathcal{S}$ be irrelevant to the reward $\Delta t$-steps away if
    \begin{equation} \label{eqn:state-irrelevance-timet}
        \E_{\tau\sim\pi_e}[r_{t+\Delta t}\given s_{t}=s,a_{t}=a] = \mathrm{constant}
        ,\quad
        \forall a\in\mathcal{A}.
    \end{equation}
    Otherwise, $s$ is relevant to the reward $\Delta t$-steps away. Using this condition, we define $\theta_{\Delta t}:\mathcal{S}\to\{0,1\}$ where $\theta_{\Delta t}(s)=0$ if $s$ is irrelevant to the reward $\Delta t$-steps away, and otherwise $\theta_{\Delta t}(s)=1$. Then
    \begin{multline} \label{eqn:stepwise-OSIRIS-def}
        \hat{V}_{\substack{\textnormal{step-wise} \\ \textnormal{OSIRIS}}}^{\pi_e}(\D; \theta_{\Delta t})
        \equiv \\
        \frac{1}{|\D|}\sum_{\tau\in\D}
        \sum_{t'=1}^{T}
        \Big(
            \gamma^{t'-1}
            r_{t'}
            \prod_{t=1}^{T} \big[\rho_{t}(\tau)\big]^{\theta_{t' - t}(s_{t})}
        \Big)
    \end{multline}
    is an unbiased estimator of $V^{\pi_e}$.
\end{theorem}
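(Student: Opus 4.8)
The plan is to exploit linearity. Since $\D$ is sampled i.i.d. from $\pi_b$, the expectation of \eqref{eqn:stepwise-OSIRIS-def} equals $\sum_{t'} \gamma^{t'-1}\,\E_{\tau\sim\pi_b}\big[ r_{t'} \prod_{t=1}^{T}[\rho_t(\tau)]^{\theta_{t'-t}(s_t)}\big]$, so it suffices to show each summand equals $\gamma^{t'-1}\E_{\tau\sim\pi_e}[r_{t'}]$; resumming then recovers $\E_{\tau\sim\pi_e}[g(\tau)]=V^{\pi_e}$. Throughout I would adopt the usual convention that trajectories are padded by an absorbing, zero-reward terminal state, so that $T$ may be treated as a fixed horizon and $r_{t'}=0$ once the episode has ended; this keeps the change-of-measure below clean despite variable lengths.

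Fix $t'$. The first step is to read off the per-reward weight as an importance ratio against a \emph{reward-specific, time-dependent} composite policy $\pi^{(t')}$ that, at time $t$ and state $s$, plays $\pi_e(\cdot\given s)$ when $\theta_{t'-t}(s)=1$ and $\pi_b(\cdot\given s)$ when $\theta_{t'-t}(s)=0$. In both cases $[\rho_t(\tau)]^{\theta_{t'-t}(s_t)}=\pi^{(t')}(a_t\given s_t)/\pi_b(a_t\given s_t)$ (the omitted factors are exactly those where $\pi^{(t')}=\pi_b$), so $\prod_t[\rho_t]^{\theta_{t'-t}(s_t)}=P(\tau;\pi^{(t')})/P(\tau;\pi_b)$. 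For $t>t'$ we have $\Delta t=t'-t<0$, and by MDP causality $\E_{\tau\sim\pi_e}[r_{t'}\given s_t,a_t]$ cannot depend on $a_t$, so $\theta_{t'-t}(s_t)=0$ there and these late factors are correctly omitted. The standard IS change of measure then gives $\E_{\tau\sim\pi_b}\big[r_{t'}\prod_t[\rho_t]^{\theta_{t'-t}(s_t)}\big]=\E_{\tau\sim\pi^{(t')}}[r_{t'}]$.

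The crux is a per-reward analog of Lemma~\ref{lem:composite-policy}: $\E_{\tau\sim\pi^{(t')}}[r_{t'}]=\E_{\tau\sim\pi_e}[r_{t'}]$. I would prove this by a hybrid (telescoping) argument over decision times, switching policies in an order that keeps $\pi_e$ in the \emph{future}, so that the irrelevance condition \eqref{eqn:state-irrelevance-timet} — which conditions the future on $\pi_e$ — applies directly. For $k=0,\ldots,t'$ let $\sigma_k$ follow the composite rule for $t\le k$ and $\pi_e$ for $t>k$; then $\sigma_0=\pi_e$ and, since actions after $t'$ do not influence $r_{t'}$, $\sigma_{t'}$ agrees with $\pi^{(t')}$ on $\E[r_{t'}]$. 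Consecutive policies $\sigma_{k-1},\sigma_k$ differ only at time $k$ and induce the same law of $s_k$ (they agree for $t<k$). Conditioning on $s_k=s$, both play $\pi_e$ after time $k$, so $\E_{\sigma_{k-1}}[r_{t'}\given s_k=s,a_k=a]=\E_{\sigma_k}[r_{t'}\given s_k=s,a_k=a]=\E_{\tau\sim\pi_e}[r_{t'}\given s_k=s,a_k=a]=:q(s,a)$. If $\theta_{t'-k}(s)=1$ the policies coincide at time $k$; if $\theta_{t'-k}(s)=0$ then $q(s,\cdot)$ is constant in $a$ by \eqref{eqn:state-irrelevance-timet}, so averaging it against $\pi_e(\cdot\given s)$ or $\pi_b(\cdot\given s)$ gives the same value. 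Either way $\E_{\sigma_{k-1}}[r_{t'}\given s_k]=\E_{\sigma_k}[r_{t'}\given s_k]$, and taking expectations over $s_k$ and chaining $k=1,\ldots,t'$ yields the claim.

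I expect the main obstacle to be precisely this hybrid step: getting the switching order right so that the conditioned future is $\pi_e$ (a naive front-to-back swap leaves the future under the composite policy, where \eqref{eqn:state-irrelevance-timet} no longer applies), and checking the two boundary regimes — $k=t'$, where $q(s,a)=R(s,a)$ and irrelevance reduces to action-independence of the reward, and $t>t'$, where causality forces $\theta_{t'-t}=0$. The remaining bookkeeping (linearity, the terminal-state padding that renders variable $T$ harmless, and the final resummation $\sum_{t'}\gamma^{t'-1}\E_{\pi_e}[r_{t'}]=V^{\pi_e}$) is routine.
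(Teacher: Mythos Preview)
Your proposal is correct and follows essentially the same route as the paper: reduce by linearity to a per-$t'$ expectation, read the modified weight as an importance ratio against a $t'$-specific composite policy, change measure, and then establish the per-reward analog of Lemma~\ref{lem:composite-policy}. Your explicit hybrid $\sigma_k$ argument for that last step spells out (and correctly orients) the recursion the paper compresses into ``recursively applying this logic,'' and you also make explicit the causality fact $\theta_{\Delta t}=0$ for $\Delta t<0$ and the absorbing-state padding, both of which the paper leaves implicit.
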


The proof is in Appendix~\ref{appendix-sec:stepwise-OSIRIS}, which uses similar techniques as for Lemma~\ref{lem:composite-policy} and Theorem~\ref{thm:OSIRIS-bias-ideal}. Notice that the per-decision importance sampling (PDIS) estimator \citepmain{precup2000eligibility} can be seen as a special case of this strategy if we use the relevance mapping $\theta_{\Delta t}^{\mathrm{PDIS}}(s) \equiv \mathbf{1}\{\Delta t \geq 0\}$, and this observation is reflected in Equation~\ref{eqn:state-irrelevance-timet} because the MDP setup assumes $a_{t}$ depends only on $s_{t}$. 

Using this step-wise OSIRIS framework, it is straightforward to extend OSIRIS to the doubly robust (DR) estimator \citepmain{jiang2016doubly} by replacing $\rho_{t'}(\tau)r_{t'}$ in each term of Equation~\ref{eqn:stepwise-OSIRIS-def} with the corresponding DR estimate $\rho_{t'}(\tau)\big[r_{t'}-\hat{Q}^{\pi_e}(s_{t'},a_{t'})\big]+\sum_{a\in\mathcal{A}}\pi_e(a\given s_{t'})\hat{Q}^{\pi_e}(s_{t'},a)$ where $\hat{Q}^{\pi_e}$ is some estimate of $Q^{\pi_e}$. Alternatively, the principle behind ordinary OSIRIS can be applied.

\section{Experiments}
\label{sec:results}

In this section, we experimentally validate the efficacy of the likelihood ratio omission procedure of OSIRIS and the relevance estimation procedure in Algorithm~\ref{alg:estimate-relevance}. We demonstrate that they improve estimator accuracy. We demonstrate that this occurs by reducing meaningless variance associated with trajectory length while strategically using $\thetahat(s;\D)$ to identify meaningful variance to be kept.

\paragraph{Environment Descriptions} Detailed descriptions of the environments and methods are in Appendix~\ref{appendix-sec:env-desc}. In summary: All policies are $\epsilon$-greedy where $\epsilon$ is smaller in the evaluation policy. We consider variants of the discrete gridworld shown in Figure~\ref{fig:gridworld-env}. The agent spends a variable number of transitions dilly-dallying in the ``corridor'' before navigating to terminal states that award either $+5$ or $-5$. Compared to this \textit{Dilly-Dallying Gridworld}, the only difference in the \textit{Express Gridworld} variant is that the behavior policy uses an even smaller $\epsilon$ but only in the corridor, which reduces the spread of the number of dilly-dallying transitions and thus alleviates trajectory length issues in IS. We also demonstrate that our implementation in Algorithm~\ref{alg:estimate-relevance} extends to continuous state spaces, specifically the popular benchmark environments \textit{Cart Pole} and \textit{Lunar Lander}. For only the calculation of state relevance, we discretized the state space by creating linearly spaced bins per state dimension. In Cart Pole, the agent receives $+1$ reward for each transition while it can keep an inverted pendulum upright. In Lunar Lander, the agent is rewarded for landing on target, penalized for firing its engines, and harshly penalized for crashing. Unless specified otherwise, results are aggregated from 200 trials where $|\D|=25$ for the Gridworlds and $|\D|=50$ for Cart Pole and Lunar Lander. All code and models used to generate these results are publicly accessible at \href{https://github.com/dtak/osiris}{\texttt{github.com/dtak/osiris}}.

\begin{figure}[t]
\vskip 0.2in
\begin{center}
\centerline{\includegraphics{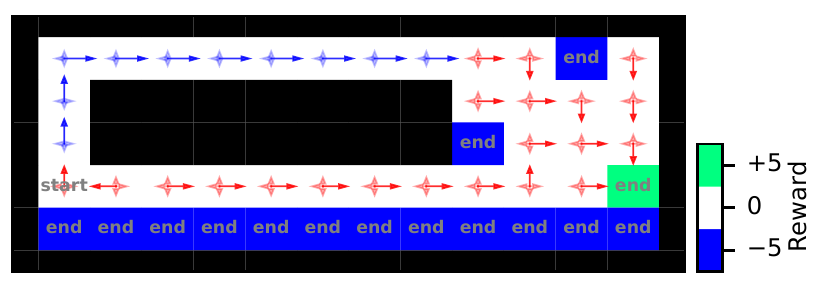}}
\caption{Environment and policies for Gridworld experiments. States with blue arrows comprise the ``corridor.''}
\label{fig:gridworld-env}
\end{center}
\vskip -0.1in
\end{figure}

\paragraph{OSIRIS/OSIRWIS mean-squared errors are generally lower than their IS counterparts.}

\begin{table*}[t]
\caption{Comparison of mean squared errors for IS-based estimators. OSIRIS/OSIRWIS with $\alpha=0.05$ generally outperforms its IS counterparts except in Express Gridworld (see text).} \label{tab:estimator-mse}
\vskip 0.15in
\begin{center}
\begin{small}
\begin{sc}
\begin{tabular}{ll|rrrrr|rr|r}
\toprule
&& IS & WIS & PHWIS & INCRIS & MIS & OSIRIS & OSIRWIS & On-Policy \\
\midrule
\multirow{3}{1in}{\textbf{Dilly-Dallying Gridworld}}
    & Mean & $\mathbf{3.5}$ & $1.1$ & $-0.3$ & $-0.1$ & $5.8$ & $1.3$ & $1.1$ & $4.3$ \\
    & Std  & $6.8$ & $3.5$ & $\mathbf{1.0}$ & $4.8$ & $1.6$ & $2.0$ & $1.8$ & $0.6$ \\
    & RMSE & $6.9$ & $4.7$ & $4.7$ & $6.5$ & $\mathbf{2.2}$ & $3.6$ & $3.7$ & $0.6$ \\
\hline
\multirow{3}{1in}{\textbf{Express Gridworld}}
    & Mean & $3.0$ & $2.7$ & $0.3$ & $\mathbf{4.3}$ & $5.1$ & $0.7$ & $0.8$ & $4.3$ \\
    & Std  & $3.5$ & $2.2$ & $\mathbf{1.1}$ & $4.0$ & $1.7$ & $1.2$ & $1.3$ & $0.6$ \\
    & RMSE & $3.7$ & $2.7$ & $4.1$ & $4.0$ & $\mathbf{1.9}$ & $3.8$ & $3.7$ & $0.6$ \\
\hline
\multirow{3}{1in}{\textbf{Cart Pole}}
    & Mean & $\mathbf{1073.5}$ & $452.2$ & $608.2$ & $1048.4$ & $2498.6$ & $1068.9$ & $759.7$ & $1503.6$ \\
    & Std  & $10202.2$ & $272.5$ & $\mathbf{97.6}$ & $4437.7$ & $583.1$ & $3961.8$ & $318.5$ & $244.8$ \\
    & RMSE & $10211.3$ & $1086.1$ & $900.7$ & $4461.0$ & $1153.3$ & $3985.5$ & $\mathbf{809.2}$ & $244.8$ \\
\hline
\multirow{3}{1in}{\textbf{Lunar Lander}}
    & Mean & $305.6$ & $264.2$ & $239.6$ & $83.8$ & $281.9$ & $\mathbf{244.6}$ & $234.8$ & $245.3$ \\
    & Std  & $768.7$ & $14.9$ & $\mathbf{9.1}$ & $149.6$ & $139.6$ & $55.3$ & $23.5$ & $6.8$ \\
    & RMSE & $771.1$ & $24.1$ & $\mathbf{10.7}$ & $220.2$ & $144.3$ & $55.3$ & $25.7$ & $6.8$ \\
\bottomrule
\end{tabular}
\end{sc}
\end{small}
\end{center}
\end{table*}

The estimator means, standard deviations, and RMSEs for each environment are listed together in Table~\ref{tab:estimator-mse}. OSIRIS/OSIRWIS generally outperforms IS, WIS, PHWIS \citepmain{doroudi2018importance}, INCRIS \citepmain{Guo2017}, and MIS \citepmain{Xie2019}. The RMSE improvement is mostly driven by variance reduction. Express Gridworld is the exception (see discussion below), where OSIRIS/OSIRWIS is not expected to do better because we modified the environment to produce trajectories with less length variability. MIS also performs well in the Gridworlds where the state space is small.

\begin{figure*}[t]
\vskip 0.2in
\begin{center}
 \begin{subfigure}[b]{2.12in}
     \includegraphics{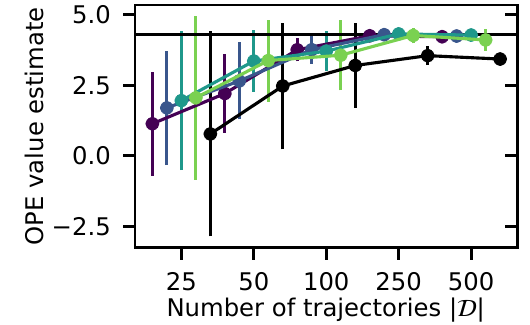}
     \caption{Dilly-Dallying Gridworld}
     \label{fig:consistency-gridworld}
 \end{subfigure}%
 \begin{subfigure}[b]{2.12in}
     \includegraphics{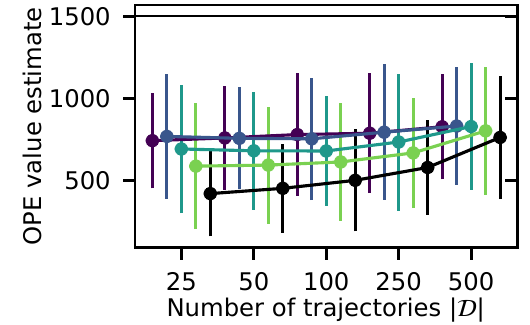}
     \caption{Cart Pole}
     \label{fig:consistency-cartpole}
 \end{subfigure}%
 \begin{subfigure}[b]{2.12in}
     \includegraphics{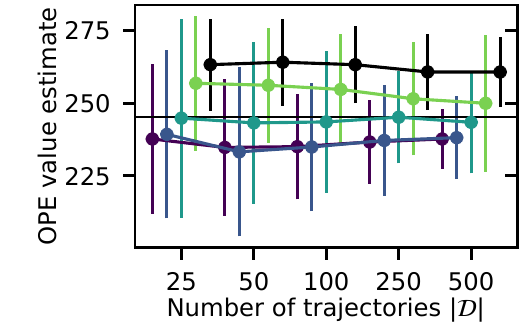}
     \caption{Lunar Lander}
     \label{fig:consistency-lunarlander}
 \end{subfigure}%
 \begin{subfigure}[b]{0.39in}
     \includegraphics{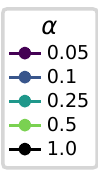}
 \end{subfigure}
\caption{Distributions of OSIRWIS estimates showing estimator consistency. Dots represent means, error bars represent standard deviations, and horizontal line represents the mean of the on-policy MC estimator. Colors indicate $\alpha$ values, where $\alpha=1$ is equivalent to ordinary WIS.}
\label{fig:mseest}
\end{center}
\vskip -0.2in
\end{figure*}

\begin{figure*}[t]
\vskip 0.2in
\begin{center}
 \begin{subfigure}[b]{2.12in}
     \includegraphics{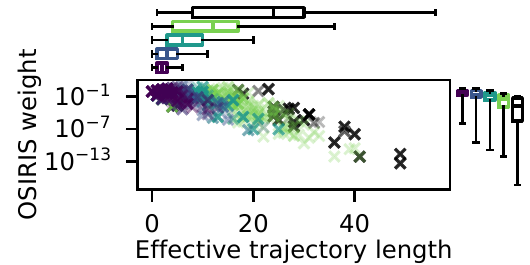}
     \caption{Dilly-Dallying Gridworld}
     \label{fig:scatter-gridworld}
 \end{subfigure}%
 \begin{subfigure}[b]{2.12in}
     \includegraphics{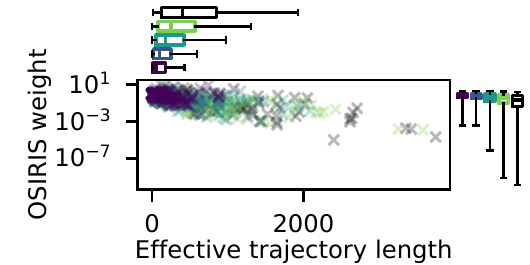}
     \caption{Cart Pole}
     \label{fig:scatter-cartpole}
 \end{subfigure}%
 \begin{subfigure}[b]{2.12in}
     \includegraphics{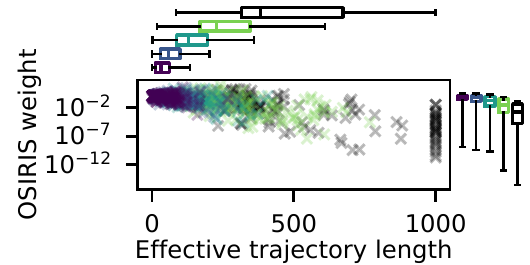}
     \caption{Lunar Lander}
     \label{fig:scatter-lunarlander}
 \end{subfigure}%
 \begin{subfigure}[b]{0.39in}
     \includegraphics{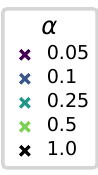}
 \end{subfigure}
\caption{Scatter plots show correlation between OSIRIS weights and effective trajectory lengths $\sum_{t=1}^{T}\thetahat(s_t)$. Boxplots show variance reduction of OSIRIS weights by shortening and evening of the effective trajectory lengths. Colors indicate $\alpha$ values, where $\alpha=1$ is equivalent to ordinary IS.}
\label{fig:scatter}
\end{center}
\vskip -0.2in
\end{figure*}

\paragraph{OSIRWIS bias decreases as $|\D|$ or $\alpha$ increases.} The distributions of OPE value estimates are plotted for different data sizes $|\D|$ and values of $\alpha$ in Figure~\ref{fig:mseest}. These results reflect the expected consistency behavior (Theorem~\ref{thm:osiris-consistent}). As the data size increases, the OSIRWIS estimator mean approaches the true policy value. As $\alpha$ increases, OSIRWIS becomes more similar to the WIS estimator.

\paragraph{Dilly-dallying contributes high variance to IS/WIS but is ignored by OSIRIS/OSIRWIS.} In Section~\ref{sec:variance-sources}, we argued that IS weights correlate with the trajectory length, so if trajectory length varies, then that can directly contribute meaningless variance to the IS estimate. This point is reflected in Figure~\ref{fig:scatter}.

Consider the Dilly-Dallying Gridworld environment, where the agent will dilly-dally in the corridor (blue arrows in Figure~\ref{fig:gridworld-env}). While the random number of dilly-dally transitions affects trajectory length, it does not affect the trajectory return because the agent accumulates zero reward in the corridor and almost always exits the corridor from the same state. IS/WIS is distracted by dilly-dally transitions in the corridor: the likelihood ratios corresponding to these low-probability transitions are less than 1, so each dilly-dally transition makes the IS/WIS weight smaller. A large and variable number of dilly-dally likelihood ratios dominates the IS/WIS weights and masks the informative likelihood ratios corresponding to transitions outside the corridor. As such, the IS/WIS weights become more informative of the number of dilly-dally transitions (trajectory length) than whether a behavior trajectory is representative of the evaluation policy (scatter plot in Figure~\ref{fig:scatter-gridworld}). Because trajectory length is not directly meaningful in this setting, this information hurts IS/WIS accuracy by contributing meaningless variance. Meanwhile, the OSIRIS/OSIRWIS estimator significantly reduces this source of variance (boxplots in Figure~\ref{fig:scatter-gridworld}) by omitting likelihood ratios corresponding to these corridor states (Figure~\ref{fig:gridworld-relevance-dd}).

In contrast, the Express Gridworld variant generates trajectories with less dilly-dallying in the corridor. Although this modification of the environment/policies does not affect policy value, it reduces the meaningless spread of trajectory lengths. Because there are fewer distracting likelihood ratios, the IS/WIS estimator becomes more accurate. Meanwhile, the OSIRIS/OSIRWIS estimator is robust to this modification: estimator accuracy remains constant (Table~\ref{tab:estimator-mse}), and the other reported results still show the same qualitative trends (Appendix~\ref{appendix-sec:gridworldxp}).

While is difficult to precisely interpret dilly-dallying in the high-dimensional Cart Pole and Lunar Lander environments, we still observe the same variance reduction trend (Figures~\ref{fig:scatter-cartpole} and \subref{fig:scatter-lunarlander}).

\paragraph{Estimated state relevance $\thetahat$ identifies key decision points where trajectory outcome is sensitive to the action taken.}

\begin{figure*}[t]
\vskip 0.2in
\begin{center}
\begin{subfigure}[t]{3.25in}
    \includegraphics{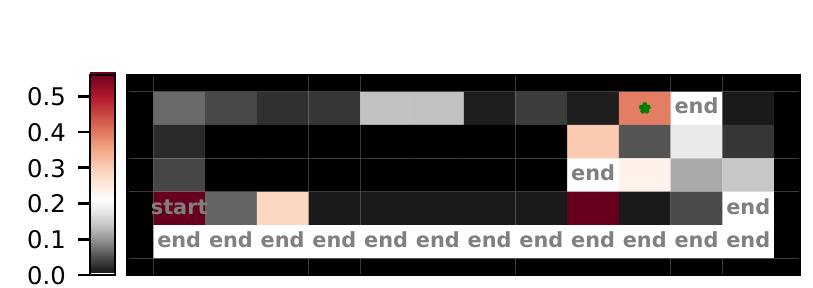}
    \caption{Dilly-Dallying Gridworld}
    \label{fig:gridworld-relevance-dd}
\end{subfigure}%
\begin{subfigure}[t]{1.6875in}
    \includegraphics{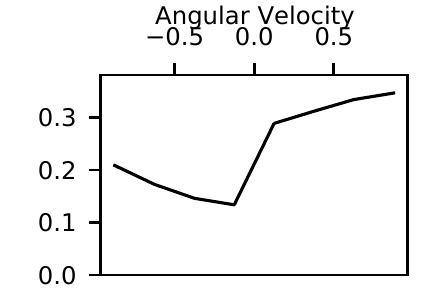}
    \caption{Cart Pole}
    \label{fig:cartpole-relevance}
\end{subfigure}%
\begin{subfigure}[t]{1.6875in}
    \includegraphics{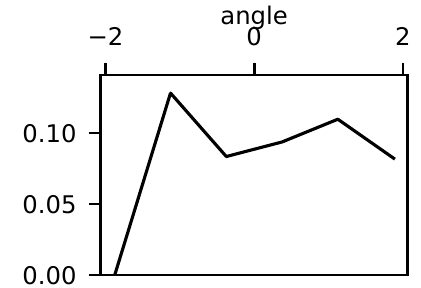}
    \caption{Lunar Lander}
    \label{fig:lunarlander-relevance}
\end{subfigure}
\caption{Mean of estimated state relevance $\thetahat(s)$ from visits to the indicated states is represented by color (\subref{fig:gridworld-relevance-dd}) or on the $y$ axis (\subref{fig:cartpole-relevance}, \subref{fig:lunarlander-relevance}). States identified as relevant (i.e. $\thetahat(s)=1$) are key decision points where trajectory outcome is sensitive to action taken.}
\label{fig:relevance}
\end{center}
\vskip -0.2in
\end{figure*}

Figure~\ref{fig:relevance} plots the proportion of trials in which the indicated state was estimated to be relevant (i.e. $\thetahat(s;\D)=1$). As designed, $\thetahat(s;\D)=1$ identifies key decision points where trajectory outcome is sensitive to the action taken, and often many different trajectory outcomes are accessible from the state.

For example, in the Dilly-Dallying Gridworld, consider the state marked with a green star in Figure~\ref{fig:gridworld-relevance-dd}. Here, if the agent moves east, then the corresponding likelihood ratio $<1$, and the trajectory ends with $-5$ return. If the agent moves south, then the corresponding likelihood ratio $>1$, and the agent will very likely end the trajectory with $+5$ return. Thus, there is a clear relationship between likelihood ratio and trajectory return in this state, which is reflected by OSIRIS's tendency to set $\thetahat(s;\D)=1$. Meanwhile, $\thetahat(s;\D)$ is often $0$ in the corridor (blue arrows in Figure~\ref{fig:gridworld-env}). In any of these corridor states, the likelihood ratio $>1$ if the agent takes the main policy action or $<1$ if it dilly-dallies. However, by the Markov assumption, this decision is independent of the agent's future actions to receive either $+5$ or $-5$ trajectory return. As such, the statistical test should not find any relationship between likelihood ratio and trajectory return, which is reflected by the tendency for $\thetahat(s;\D)= 0$.

The Lunar Lander and Cart Pole environments also demonstrate this principle: $\thetahat(s;D)=1$ identifies relevant states where the agent can avoid crashing by following the optimal policy action but will likely crash if a random action is taken. This establishes a positive correlation between trajectory return and likelihood ratio that is detected by the statistical test. In Cart Pole, the detected relevant states tend to have large angular velocity that can be stabilized by taking the optimal policy action instead of a random action (Figure~\ref{fig:cartpole-relevance}). Similarly, in Lunar Lander, when the agent is not level, it is important whether the agent chooses stabilizing actions (Figure~\ref{fig:lunarlander-relevance}). But interestingly, if it has rotated too far, the state tends to be considered irrelevant -- at this point, there is no hope as the agent will likely crash no matter what action it takes. This principle can be observed in the other state dimensions too (Appendix~\ref{appendix-sec:relevance-other-dims}).

\section{Discussion and Conclusion}

We presented the OSIRIS estimator to reduce importance sampling variance in settings with long and varying trajectory lengths. The algorithm strategically identifies and omits irrelevant likelihood ratios in a way that introduces minimal bias. This procedure can technically be applied to any IS-based estimator or even for the purposes of interpretability \citepmain{gottesman2020interpretable}. 

We also describe when OSIRIS will provide the most benefit. In environments where trajectory length is not directly correlated with the trajectory return, OSIRIS will shine because the covariance term in Equations~\ref{eqn:OSIRIS-variance} and \ref{eqn:OSIRIS-mean} is small: length is meaningless (assumed in these environments) and the individual omitted likelihood ratios are also meaningless (by state irrelevance). In contrast, in environments like Cart Pole, where trajectory length is directly related to trajectory return, likelihood ratio omission could disrupt any naturally occurring relationship between trajectory return and IS weight (aka trajectory length). Depending on the direction of the relationship, which determines the sign of the covariance term (aka the bias term), this could be favorable or harmful by shifting the estimator distribution closer to or further from the true value.  

More broadly, OSIRIS's likelihood ratio omission directly addresses the variance of \textit{long} trajectories, but depending on the environment, it may create more \textit{varying} trajectory length, e.g. by omitting several likelihood ratios in some trajectories while leaving others untouched.  While we did not see this as a dominating factor in our experiments, future extensions could address this issue of uneven lengths by keeping only the $T_{\max}$-most relevant likelihood ratios for some constant $T_{\max}$. Nonetheless, because we omit irrelevant likelihood ratios, we at least expect the OSIRIS weight to give more attention to the relevant likelihood ratios.

\section*{Acknowledgements}

We thank the reviewers for their thoughtful comments. Part of this work was completed while S.P.S. was supported by NSF GRFP. S.P.S. and F.D.V. also acknowledge support from NSF RI-Small 2007076.

\bibliographymain{bibliography}
\bibliographystylemain{icml2021}

\onecolumn
\icmltitlerunning{Supplementary Materials: State Relevance for Off-Policy Evaluation}
\icmltitle{Supplementary Materials: State Relevance for Off-Policy Evaluation}

\appendix
\setcounter{equation}{0}
\setcounter{figure}{0}
\setcounter{table}{0}
\setcounter{lemma}{0}
\setcounter{proposition}{0}
\setcounter{theorem}{0}

\section{Lemma}
\label{appendix-sec:OSIRIS-expect-pf}

Given any mapping $\theta' : \mathcal{S} \to \{0,1\}$,
\begin{equation} \label{lem:lr-expectation}
    \E_{\D\sim\pi_b}[\rho_{\theta'}^\complement(\tau^{(1)})] = 1
\end{equation}

\begin{proof}
    From the definition of the OSIRIS weight,
    \begin{align}
        \E_{\D\sim\pi_b}&[\rho_{\theta'}^\complement(\tau^{(1)})]
        =
        \E_{\D\sim\pi_b}\Big[
            \prod_{t=1}^{T} \Big[
                 \frac{\pi_e(a_t\given s_t)}{\pi_b(a_t\given s_t)}
            \Big]^{1 - \theta'(s_t)}
        \Big]
        \label{appendix-step:lemma-wt}
        \\
        &=
        \int \Big[
            \prod_{t=1}^{T} \Big[
                 \frac{\pi_e(a_t\given s_t)}{\pi_b(a_t\given s_t)}
            \Big]^{1 - \theta'(s_t)} 
            \pi_b(a_t\given s_t)
            P(s_{t+1}\given s_t, a_t)
            P(s_1)
        \Big] \, d\tau
        \label{appendix-step:lemma-jointb}
        \\
        &= \int \Big[ \prod_{t=1}^{T} \pi_e(a_t\given s_t)^{1 - \theta'(s_t)} \pi_b(a_t\given s_t)^{\theta'(s_t)}
        P(s_{t+1}\given s_t, a_t)
        P(s_1)
        \Big] \, d\tau
        \label{appendix-step:lemma-cancel}
        \\
        &=1
        \label{appendix-step:lemma-jointbe}
    \end{align}
    where we use the joint distribution of trajectories generated from $\pi_b$ (\ref{appendix-step:lemma-jointb}) and cancel terms (\ref{appendix-step:lemma-cancel}). Finally, $\theta'(s_t)\in\{0,1\}$ selects either $\pi_e$ or $\pi_b$ as the integrand, both of which integrate to 1 by definition, so the full joint distribution also integrates to 1 (\ref{appendix-step:lemma-jointbe}).
\end{proof}

Notice that if we did not assume $\theta'$ is given, then we should assume that $\theta'$ is calculated using $\D \setminus \tau^{(1)}$. Otherwise, $\theta'$ would have statistical dependence with the transitions in trajectory $\tau^{(1)}$, which could not be resolved in (\ref{appendix-step:lemma-jointb}).

\section{Derivations of Sources of IS Variance}

\subsection{Variance due to Long Trajectory Length}
\label{appendix-sec:pf-variance-length}

\begin{proposition}
    For any subsets $\mathcal{T}_1 \subsetneq \mathcal{T}_2 \subseteq \{1, \ldots, T\}$, if $\pi_e \neq \pi_b$, and $\rho_1(\tau),\ldots,\rho_T(\tau)$ are mutually independent, then
    \begin{equation}
        \Var_{\tau\sim\pi_b}\big[  
            \prod_{t \in \mathcal{T}_1}
            \rho_t(\tau)
            \given s_1, \ldots, s_T
        \big]
        <
        \Var_{\tau\sim\pi_b}\big[ 
            \prod_{t \in \mathcal{T}_2}
            \rho_t(\tau)
            \given s_1, \ldots, s_T
        \big]
    \end{equation}
\end{proposition}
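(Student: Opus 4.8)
The plan is to exploit the multiplicative structure of the product of likelihood ratios together with the conditional independence assumption. Since the variables $\rho_1(\tau),\ldots,\rho_T(\tau)$ are mutually independent (given the states), the variance of a product factors in a clean way. The key algebraic fact is that for independent nonnegative random variables $X,Y$,
\[
    \Var[XY] = \E[X^2]\E[Y^2] - \E[X]^2\E[Y]^2.
\]
Since $\mathcal{T}_1 \subsetneq \mathcal{T}_2$, I can write the larger product as the smaller product times the product over the extra indices $\mathcal{T}_2 \setminus \mathcal{T}_1$, which is nonempty. These two groups are independent by assumption, so the variance of the larger product factors through the formula above.

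\textbf{The key steps}, in order: First I would set $X \equiv \prod_{t\in\mathcal{T}_1}\rho_t(\tau)$ and $Y \equiv \prod_{t\in\mathcal{T}_2\setminus\mathcal{T}_1}\rho_t(\tau)$, all conditioned on $s_1,\ldots,s_T$, so that the larger product is $XY$ with $X,Y$ independent. Second, I would apply the product-variance identity to get $\Var[XY] = \E[X^2]\E[Y^2] - \E[X]^2\E[Y]^2$. Third, I would use the fundamental fact that each likelihood ratio has conditional expectation $\E_{\tau\sim\pi_b}[\rho_t(\tau)\given s_t] = \sum_a \pi_b(a\given s_t)\frac{\pi_e(a\given s_t)}{\pi_b(a\given s_t)} = \sum_a \pi_e(a\given s_t) = 1$; by independence this gives $\E[Y]=1$, so $\E[Y]^2 = 1$. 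Substituting, the inequality $\Var[XY] > \Var[X]$ reduces to showing $\E[X^2]\E[Y^2] - \E[X]^2 > \E[X^2] - \E[X]^2$, i.e. $\E[X^2]\big(\E[Y^2]-1\big) > 0$.

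\textbf{The main obstacle}, and the crux of the argument, is establishing the strict inequality $\E[Y^2] > 1$. Since $\E[Y]=1$, we have $\E[Y^2] = 1 + \Var[Y]$, so it suffices to show $\Var[Y] > 0$, i.e. that the product over the extra indices is genuinely random. This is where the hypothesis $\pi_e \neq \pi_b$ is essential: if $\pi_e = \pi_b$ then every likelihood ratio would equal $1$ deterministically and the variance would vanish. I would argue that because $\pi_e \neq \pi_b$, there is at least one index $t\in\mathcal{T}_2\setminus\mathcal{T}_1$ (at least in the relevant conditioning) for which $\rho_t$ is non-degenerate, giving $\Var[\rho_t]>0$ and hence $\Var[Y]>0$ by the independence/factorization of the second moments. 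The one subtlety to handle carefully is that $\pi_e\neq\pi_b$ is a global statement about the policies, whereas the conditioning fixes particular states $s_t$; I would need the convention (consistent with the rest of the paper treating this conditional variance as strictly positive) that at the conditioned states the policies differ, so that the likelihood ratio is truly a non-constant random variable. Finally, combining $\E[Y^2]>1$ with $\E[X^2]\geq \E[X]^2 \geq 0$ — and noting $\E[X^2]>0$ since $X$ is a positive random variable — yields the strict inequality and completes the proof.
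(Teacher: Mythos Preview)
Your proposal is correct and follows essentially the same approach as the paper: both exploit the factorization of second moments under independence, the identity $\E_{\tau\sim\pi_b}[\rho_t(\tau)\given s_t]=1$, and the strict positivity of $\Var[\rho_t]$ when $\pi_e\neq\pi_b$. The only cosmetic difference is that the paper decomposes the product fully into individual factors to obtain $\prod_{t}(\Var[\rho_t]+1)-1$ on each side, whereas you group into just two blocks $X$ and $Y$; your version is slightly more economical, and you correctly flag the same subtlety the paper glosses over---that the global hypothesis $\pi_e\neq\pi_b$ is being used as if it held at every conditioned state.
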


\begin{proof}
    We first consider the variance of the product of general mutually independent random variables $X_1,\ldots,X_N$:
    \begin{align}
        \Var\big[\prod_{n=1}^{N} X_n\big]
        &= \E\big[\prod_{n=1}^{N} X_n^2\big] - \E\big[\prod_{n=1}^{N} X_n\big]^2
        \label{appendix-step:var-def}
        \\
        &= \prod_{n=1}^{N}\E\big[ X_n^2\big] - \prod_{n=1}^{N}\E\big[ X_n\big]^2
        \label{appendix-step:var-indep}
        \\
        &= \prod_{n=1}^{N}\Big(
            \E\big[ X_n^2\big] - \E\big[ X_n\big]^2 + \E\big[ X_n\big]^2
        \Big) - \prod_{n=1}^{N}\E\big[ X_n\big]^2
        \label{appendix-step:var-expand}
        \\
        &= \prod_{n=1}^{N}\Big(
            \Var\big[ X_n\big] + \E\big[ X_n\big]^2
        \Big) - \prod_{n=1}^{N}\E\big[ X_n\big]^2
        \label{appendix-step:var-def-again}
    \end{align}
    where we use the definition of variance (\ref{appendix-step:var-def}), use the assumption that the variables are independent (\ref{appendix-step:var-indep}), introduce the same term (\ref{appendix-step:var-expand}), and reuse the definition of variance (\ref{appendix-step:var-def-again}).
    
    Using this fact, the inequality is equivalent to
    \begin{multline}
        \prod_{t \in \mathcal{T}_1}\Big(
            \Var_{\tau\sim\pi_b}\big[ {\textstyle \frac{\pi_e(a_{t}\given s_{t})}{\pi_b(a_{t}\given s_{t})}}\given s_1, \ldots, s_T\big] + \E_{\tau\sim\pi_b}\big[ {\textstyle \frac{\pi_e(a_{t}\given s_{t})}{\pi_b(a_{t}\given s_{t})}}\given s_1, \ldots, s_T\big]^2
        \Big) - \prod_{t \in \mathcal{T}_1}\E_{\tau\sim\pi_b}\big[ {\textstyle \frac{\pi_e(a_{t}\given s_{t})}{\pi_b(a_{t}\given s_{t})}}\given s_1, \ldots, s_T\big]^2
        \\
        <
        \prod_{t \in \mathcal{T}_2}\Big(
            \Var_{\tau\sim\pi_b}\big[ {\textstyle \frac{\pi_e(a_{t}\given s_{t})}{\pi_b(a_{t}\given s_{t})}}\given s_1, \ldots, s_T\big] + \E_{\tau\sim\pi_b}\big[ {\textstyle \frac{\pi_e(a_{t}\given s_{t})}{\pi_b(a_{t}\given s_{t})}}\given s_1, \ldots, s_T\big]^2
        \Big) - \prod_{t \in \mathcal{T}_2}\E_{\tau\sim\pi_b}\big[ {\textstyle \frac{\pi_e(a_{t}\given s_{t})}{\pi_b(a_{t}\given s_{t})}}\given s_1, \ldots, s_T\big]^2
    \end{multline}
    We apply Equation~\ref{lem:lr-expectation}:
    \begin{equation}
        \prod_{t \in \mathcal{T}_1}\Big(
            \Var_{\tau\sim\pi_b}\big[ {\textstyle \frac{\pi_e(a_{t}\given s_{t})}{\pi_b(a_{t}\given s_{t})}}\given s_1, \ldots, s_T\big] + 1
        \Big) - 1
        <
        \prod_{t \in \mathcal{T}_2}\Big(
            \Var_{\tau\sim\pi_b}\big[ {\textstyle \frac{\pi_e(a_{t}\given s_{t})}{\pi_b(a_{t}\given s_{t})}}\given s_1, \ldots, s_T\big] + 1
        \Big) - 1
    \end{equation}
    In general, variance is greater than or equal to zero; here the variance is strictly greater than zero because $\frac{\pi_e(a_t \given s_t)}{\pi_b(a_t \given s_t)}$is not constant because we assume $\pi_e\neq\pi_b$. Altogether, because LHS is the product of fewer variances $>0$, we have that LHS $<$ RHS.
\end{proof}

\subsection{Relationship between Trajectory Length and IS Weight}
\label{appendix-sec:pf-jensen}

\begin{proposition}
    For any subsets $\mathcal{T}_1\subsetneq\mathcal{T}_2\subseteq\{1, \ldots, T\}$, if $\pi_e \neq \pi_b$, then
    \begin{equation}
        \displaystyle
        \E_{\tau\sim\pi_b}\big[ \log \prod_{t\in\mathcal{T}_1} \rho_t(\tau) \big]
        >
        \displaystyle
        \E_{\tau\sim\pi_b}\big[ \log \prod_{t\in\mathcal{T}_2} \rho_t(\tau) \big]
    \end{equation}
\end{proposition}

\begin{proof}
    Using the identity for the log of a product and linearity of expectation, the inequality is equivalent to
    \begin{equation}
        \displaystyle
        \sum_{t\in\mathcal{T}_1} \E_{\tau\sim\pi_b}\big[ \log \textstyle \frac{\pi_e(a_t\given s_t)}{\pi_b(a_t\given s_t)} \big]
        >
        \displaystyle
        \sum_{t\in\mathcal{T}_2} \E_{\tau\sim\pi_b}\big[ \log \textstyle \frac{\pi_e(a_t\given s_t)}{\pi_b(a_t\given s_t)} \big]
    \end{equation}

    By Jensen's inequality, for each individual expectation in the sum, $\displaystyle\E_{\tau\sim\pi_b}\big[ \log \textstyle \frac{\pi_e(a_t\given s_t)}{\pi_b(a_t\given s_t)} \big] \leq \log \displaystyle\E_{\tau\sim\pi_b}\big[ \textstyle \frac{\pi_e(a_t\given s_t)}{\pi_b(a_t\given s_t)} \big] = 0$ where we also use Equation~\ref{lem:lr-expectation} ($\displaystyle\E_{\tau\sim\pi_b}\big[ \textstyle \frac{\pi_e(a_t\given s_t)}{\pi_b(a_t\given s_t)} \big] = 1$). Equality holds only if $\frac{\pi_e(a\given s)}{\pi_b(a\given s)}$ is constant, which is not true because we assume $\pi_e\neq \pi_b$. Thus, because the LHS is the sum of fewer expectations $<0$, we have the strict inequality that LHS $>$ RHS.

\end{proof}

\section{Derivations for the General OSIRIS Estimator}

\subsection{Variance}
\label{appendix-sec:OSIRIS-variance-pf}

\begin{theorem}
    Given any mapping $\theta' : \mathcal{S} \to \{0,1\}$, the variance of the OSIRIS estimator is:
    \begin{subequations}
        \begin{gather}
            \Var_{\D\sim\pi_b}[\Vhat_{\OSIRIS}^{\pi_e}(\D; \theta')]
        	=
            \Var_{\D\sim\pi_b}[\Vhat_{\IS}^{\pi_e}(\D)]
            \hspace{1in}\nonumber
        	\\
            + \frac{1}{|\D|}
            \E_{\D\sim\pi_b}[\Vhat_{\IS}^{\pi_e}(\tau^{(1)})]^2
        	- \frac{1}{|\D|}
        	\E_{\D\sim\pi_b}[\Vhat_{\OSIRIS}^{\pi_e}(\tau^{(1)}; \theta')]^2
        	\\
        	- \frac{1}{|\D|}
            	\E_{\D\sim\pi_b}[
            		\Vhat_{\OSIRIS}^{\pi_e}(\tau^{(1)}; \theta')^2
            		]
            	\,
            	\Var_{\D\sim\pi_b}[
            	    \rho_{\theta'}^\complement(\tau^{(1)})
            	    ]
        	\\
        	\hspace{0.25in}
        	- \frac{1}{|\D|}
        	    \Cov_{\D\sim\pi_b}[ \Vhat_{\OSIRIS}^{\pi_e}(\tau^{(1)}; \theta')^2
        	    ,\, \rho_{\theta'}^\complement(\tau^{(1)})^2 ]
        \end{gather}
    \end{subequations}
\end{theorem}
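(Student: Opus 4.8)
The plan is to collapse everything to single-trajectory quantities using the i.i.d. sampling structure, and then to relate the single-trajectory OSIRIS and IS estimates through the product of omitted likelihood ratios. Write $N\equiv|\D|$, and since $\tau^{(1)}\sim\pi_b$ abbreviate the single-trajectory random variables $X\equiv\Vhat_{\OSIRIS}^{\pi_e}(\tau^{(1)};\theta')=g(\tau^{(1)})\,\rho_{\theta'}(\tau^{(1)})$ and $W\equiv\rho_{\theta'}^\complement(\tau^{(1)})$. The identity $\Vhat_\IS^{\pi_e}(\tau)=\Vhat_\OSIRIS^{\pi_e}(\tau;\theta')\cdot\rho_{\theta'}^\complement(\tau)$ stated in the main text then gives the single-trajectory IS estimate as the product $XW$. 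Because the $N$ trajectories are sampled i.i.d. and $\theta'$ is a fixed given mapping, both estimators are sample means of i.i.d. terms, so $\Var_{\D\sim\pi_b}[\Vhat_\OSIRIS^{\pi_e}(\D;\theta')]=\tfrac1N\Var[X]$ and $\Var_{\D\sim\pi_b}[\Vhat_\IS^{\pi_e}(\D)]=\tfrac1N\Var[XW]$.

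Next I would subtract these two and expand each variance via $\Var[Y]=\E[Y^2]-\E[Y]^2$, obtaining
\[ N\big(\Var[\Vhat_\OSIRIS^{\pi_e}(\D;\theta')]-\Var[\Vhat_\IS^{\pi_e}(\D)]\big)=\E[X^2]-\E[X]^2-\E[X^2W^2]+\E[XW]^2. \]
The crux is rewriting the cross term $\E[X^2W^2]$. Applying the covariance identity gives $\E[X^2W^2]=\Cov[X^2,W^2]+\E[X^2]\,\E[W^2]$, and then using $\E[W^2]=\Var[W]+\E[W]^2$ together with $\E[W]=1$ (the Lemma, Equation~\ref{lem:lr-expectation}) yields $\E[X^2W^2]=\Cov[X^2,W^2]+\E[X^2]\Var[W]+\E[X^2]$. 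Substituting this back cancels the bare $\E[X^2]$ against $-\E[X^2]$ and leaves exactly
\[ N\big(\Var[\Vhat_\OSIRIS^{\pi_e}(\D;\theta')]-\Var[\Vhat_\IS^{\pi_e}(\D)]\big)=\E[XW]^2-\E[X]^2-\E[X^2]\Var[W]-\Cov[X^2,W^2]. \]
Translating $X$, $W$ back to the estimator notation and dividing by $N$, with $\Var[\Vhat_\IS^{\pi_e}(\D)]$ moved to the right-hand side, reproduces the four claimed correction terms~(\ref{eqn:OSIRIS-variance-centers})--(\ref{eqn:OSIRIS-variance-cov}).

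The step I expect to require the most care is the appeal to $\E[W]=1$: this is precisely where the independence assumption enters, since the Lemma only holds when $\rho_{\theta'}^\complement$ carries no statistical dependence on $\tau^{(1)}$, which here is guaranteed because $\theta'$ is taken as a fixed given mapping rather than one estimated from $\tau^{(1)}$ itself. Everything else is routine algebra once the covariance identity and the simplification $\E[W^2]=\Var[W]+1$ are in hand; the only bookkeeping subtlety is keeping track of which quantities are squared inside the expectation (the second-moment terms $\E[X^2]$, $W^2$) versus squared outside it (the mean terms $\E[X]^2$, $\E[XW]^2$), which is what distinguishes Term~(\ref{eqn:OSIRIS-variance-cov}) from the mirrored bias term in Theorem~\ref{thm:OSIRIS-bias}.
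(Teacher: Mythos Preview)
Your proposal is correct and follows essentially the same route as the paper's proof: reduce to single-trajectory variances via the i.i.d.\ structure, use $\Vhat_\IS(\tau)=\Vhat_\OSIRIS(\tau;\theta')\cdot\rho_{\theta'}^\complement(\tau)$, apply the covariance identity to $\E[X^2W^2]$, and invoke the lemma $\E[W]=1$ to collapse $\E[W^2]$ into $\Var[W]+1$. Your $X,W$ shorthand is cleaner than the paper's line-by-line manipulation, but the sequence of ideas and the key dependence on the lemma are identical.
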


\begin{proof}
    \begin{align}
        \Var_{\D\sim\pi_b}&[\Vhat_\OSIRIS^{\pi_e}(\D)]
        -
        \Var_{\D\sim\pi_b}[\Vhat_\IS^{\pi_e}(\D)]
        \nonumber
        \\
        &=
            \E_{\D\sim\pi_b}[\Vhat_\OSIRIS^{\pi_e}(\D)^2]
            -
            \E_{\D\sim\pi_b}[\Vhat_\IS^{\pi_e}(\D)^2]
            -
            \E_{\D\sim\pi_b}[\Vhat_\OSIRIS^{\pi_e}(\D)]^2
            +
            \E_{\D\sim\pi_b}[\Vhat_\IS^{\pi_e}(\D)]^2
            \label{appendix-step:variance-var-def}
        \\
        &=
            \frac{1}{|\D|}
            \E_{\D\sim\pi_b}[\Vhat_\OSIRIS^{\pi_e}(\tau^{(1)})^2]
            -
            \frac{1}{|\D|}
            \E_{\D\sim\pi_b}[\Vhat_\IS^{\pi_e}(\tau^{(1)})^2]
            -
            \frac{1}{|\D|}
            \E_{\D\sim\pi_b}[\Vhat_\OSIRIS^{\pi_e}(\tau^{(1)})]^2
            +
            \frac{1}{|\D|}
            \E_{\D\sim\pi_b}[\Vhat_\IS^{\pi_e}(\tau^{(1)})]^2
            \label{appendix-step:variance-linear}
        \\
        &=
            \frac{1}{|\D|}
            \E_{\D\sim\pi_b}[\Vhat_\OSIRIS^{\pi_e}(\tau^{(1)})^2]
            -
            \frac{1}{|\D|}
            \E_{\D\sim\pi_b}[
                \Vhat_\OSIRIS^{\pi_e}(\tau^{(1)})^2
                \rho_{\theta'}^\complement(\tau^{(1)})^2
                ]
            \nonumber\\&\qquad
            -
            \frac{1}{|\D|}
            \E_{\D\sim\pi_b}[\Vhat_\OSIRIS^{\pi_e}(\tau^{(1)})]^2
            +
            \frac{1}{|\D|}
            \E_{\D\sim\pi_b}[\Vhat_\IS^{\pi_e}(\tau^{(1)})]^2
            \label{appendix-step:variance-is-split}
        \\
        &=
            \frac{1}{|\D|}
            \E_{\D\sim\pi_b}[\Vhat_\OSIRIS^{\pi_e}(\tau^{(1)})^2]
            -
            \frac{1}{|\D|}
            \E_{\D\sim\pi_b}[
                \Vhat_\OSIRIS^{\pi_e}(\tau^{(1)})^2
                ]
            \E_{\D\sim\pi_b}[
                \rho_{\theta'}^\complement(\tau^{(1)})^2
                ]
            \nonumber\\&\qquad
            -
            \frac{1}{|\D|}
            \E_{\D\sim\pi_b}[\Vhat_\OSIRIS^{\pi_e}(\tau^{(1)})]^2
            +
            \frac{1}{|\D|}
            \E_{\D\sim\pi_b}[\Vhat_\IS^{\pi_e}(\tau^{(1)})]^2
            -
            \frac{1}{|\D|}
            \Cov_{\D\sim\pi_b}[
                \Vhat_\OSIRIS^{\pi_e}(\tau^{(1)})^2
                ,\,
                \rho_{\theta'}^\complement(\tau^{(1)})^2
                ]
            \label{appendix-step:variance-covariance}
        \\
        &=
            \frac{1}{|\D|}
            \E_{\D\sim\pi_b}[\Vhat_\OSIRIS^{\pi_e}(\tau^{(1)})^2]
            \E_{\D\sim\pi_b}[
                \rho_{\theta'}^\complement(\tau^{(1)})
                ]^2
            -
            \frac{1}{|\D|}
            \E_{\D\sim\pi_b}[
                \Vhat_\OSIRIS^{\pi_e}(\tau^{(1)})^2
                ]
            \E_{\D\sim\pi_b}[
                \rho_{\theta'}^\complement(\tau^{(1)})^2
                ]
            \nonumber\\&\qquad
            -
            \frac{1}{|\D|}
            \E_{\D\sim\pi_b}[\Vhat_\OSIRIS^{\pi_e}(\tau^{(1)})]^2
            +
            \frac{1}{|\D|}
            \E_{\D\sim\pi_b}[\Vhat_\IS^{\pi_e}(\tau^{(1)})]^2
            -
            \frac{1}{|\D|}
            \Cov_{\D\sim\pi_b}[
                \Vhat_\OSIRIS^{\pi_e}(\tau^{(1)})^2
                ,\,
                \rho_{\theta'}^\complement(\tau^{(1)})^2
                ]
            \label{appendix-step:variance-lr-expect}
        \\
        &=
            \frac{1}{|\D|}
            \E_{\D\sim\pi_b}[\Vhat_\OSIRIS^{\pi_e}(\tau^{(1)})^2]
            \Big(
                \E_{\D\sim\pi_b}[
                    \rho_{\theta'}^\complement(\tau^{(1)})
                    ]^2
                -
                \E_{\D\sim\pi_b}[
                    \rho_{\theta'}^\complement(\tau^{(1)})^2
                    ]
            \Big)
            \nonumber\\&\qquad
            -
            \frac{1}{|\D|}
            \E_{\D\sim\pi_b}[\Vhat_\OSIRIS^{\pi_e}(\tau^{(1)})]^2
            +
            \frac{1}{|\D|}
            \E_{\D\sim\pi_b}[\Vhat_\IS^{\pi_e}(\tau^{(1)})]^2
            -
            \frac{1}{|\D|}
            \Cov_{\D\sim\pi_b}[
                \Vhat_\OSIRIS^{\pi_e}(\tau^{(1)})^2
                ,\,
                \rho_{\theta'}^\complement(\tau^{(1)})^2
                ]
            \label{appendix-step:variance-factor}
        \\
        &=
            -
            \frac{1}{|\D|}
            \E_{\D\sim\pi_b}[\Vhat_\OSIRIS^{\pi_e}(\tau^{(1)})^2]
            \Var_{\D\sim\pi_b}[
                \rho_{\theta'}^\complement(\tau^{(1)})
                ]
            \nonumber\\&\qquad
            -
            \frac{1}{|\D|}
            \E_{\D\sim\pi_b}[\Vhat_\OSIRIS^{\pi_e}(\tau^{(1)})]^2
            +
            \frac{1}{|\D|}
            \E_{\D\sim\pi_b}[\Vhat_\IS^{\pi_e}(\tau^{(1)})]^2
            -
            \frac{1}{|\D|}
            \Cov_{\D\sim\pi_b}[
                \Vhat_\OSIRIS^{\pi_e}(\tau^{(1)})^2
                ,\,
                \rho_{\theta'}^\complement(\tau^{(1)})^2
                ]
            \label{appendix-step:variance-var-def-again}
    \end{align}
    where we use
    the definition of variance
    (\ref{appendix-step:variance-var-def}),
    linearity of expectation
    (\ref{appendix-step:variance-linear}),
    the relationship between IS and OSIRIS
    (\ref{appendix-step:variance-is-split}),
    the definition of covariance
    (\ref{appendix-step:variance-covariance}),
    Equation~\ref{lem:lr-expectation}
    (\ref{appendix-step:variance-lr-expect}),
    factor terms
    (\ref{appendix-step:variance-factor}), and
    the definition of variance again (\ref{appendix-step:variance-var-def-again}).
\end{proof}

\subsection{Bias}
\label{appendix-sec:OSIRIS-mean-pf}

\begin{theorem}
    Given any mapping $\theta' : \mathcal{S} \to \{0,1\}$, the mean of the OSIRIS estimator is:
    \begin{equation}
        \E_{\D\sim\pi_b}[\Vhat_\OSIRIS^{\pi_e}(\D; \theta')] = 
            V^{\pi_e}
            -
            \Cov_{\D\sim\pi_b}[ \Vhat_\OSIRIS^{\pi_e}(\tau^{(1)}; \theta'), \, \rho_{\theta'}^\complement(\tau^{(1)})]
    \end{equation}
\end{theorem}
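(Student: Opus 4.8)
The plan is to start from the covariance term on the right-hand side, expand it into a difference of moments, and then identify each moment using results already established in the excerpt. Expanding the definition of covariance gives
\begin{multline*}
\Cov_{\D\sim\pi_b}[\Vhat_\OSIRIS^{\pi_e}(\tau^{(1)}; \theta'),\, \rho_{\theta'}^\complement(\tau^{(1)})]
= \E_{\D\sim\pi_b}[\Vhat_\OSIRIS^{\pi_e}(\tau^{(1)}; \theta')\,\rho_{\theta'}^\complement(\tau^{(1)})] \\
- \E_{\D\sim\pi_b}[\Vhat_\OSIRIS^{\pi_e}(\tau^{(1)}; \theta')]\,\E_{\D\sim\pi_b}[\rho_{\theta'}^\complement(\tau^{(1)})],
\end{multline*}
and I would treat the cross-moment and the product of expectations separately.

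For the cross-moment, the key step is to invoke the single-trajectory factorization $\Vhat_\IS^{\pi_e}(\tau) = \Vhat_\OSIRIS^{\pi_e}(\tau; \theta')\cdot \rho_{\theta'}^\complement(\tau)$ stated just before the theorem. This immediately collapses $\E_{\D\sim\pi_b}[\Vhat_\OSIRIS^{\pi_e}(\tau^{(1)}; \theta')\,\rho_{\theta'}^\complement(\tau^{(1)})]$ into $\E_{\D\sim\pi_b}[\Vhat_\IS^{\pi_e}(\tau^{(1)})]$, which equals $V^{\pi_e}$ because ordinary IS is unbiased even on a single trajectory (its expectation is $\E_{\tau\sim\pi_b}[g(\tau)\rho(\tau)] = V^{\pi_e}$). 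For the product of expectations, I would apply the Appendix lemma $\E_{\D\sim\pi_b}[\rho_{\theta'}^\complement(\tau^{(1)})] = 1$, which leaves the factor $\E_{\D\sim\pi_b}[\Vhat_\OSIRIS^{\pi_e}(\tau^{(1)}; \theta')]$ intact.

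Combining these two identifications yields $\Cov_{\D\sim\pi_b}[\Vhat_\OSIRIS^{\pi_e}(\tau^{(1)}; \theta'),\, \rho_{\theta'}^\complement(\tau^{(1)})] = V^{\pi_e} - \E_{\D\sim\pi_b}[\Vhat_\OSIRIS^{\pi_e}(\tau^{(1)}; \theta')]$. The final step is to rearrange this and invoke the i.i.d.\ structure of $\D$: since the OSIRIS estimator is an average of identically distributed single-trajectory terms, $\E_{\D\sim\pi_b}[\Vhat_\OSIRIS^{\pi_e}(\D; \theta')] = \E_{\D\sim\pi_b}[\Vhat_\OSIRIS^{\pi_e}(\tau^{(1)}; \theta')]$, which delivers the claimed identity.

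The main subtlety, rather than a genuine obstacle, is the appeal to $\E_{\D\sim\pi_b}[\rho_{\theta'}^\complement(\tau^{(1)})] = 1$. This holds cleanly only because $\theta'$ is treated here as a fixed, given mapping, so there is no statistical dependence between $\theta'$ and the transitions of $\tau^{(1)}$ that would block the cancellation of $\pi_b$ in the numerator against $\pi_b$ in the denominator of the omitted likelihood ratios. If $\theta'$ were instead estimated from the data, one would need to compute it on $\D \setminus \{\tau^{(1)}\}$ to preserve this independence, exactly as flagged after the Appendix lemma; under the theorem's hypothesis that $\theta'$ is given, no such care is required.
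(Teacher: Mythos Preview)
Your proof is correct and essentially identical to the paper's: both use the factorization $\Vhat_\IS^{\pi_e}(\tau) = \Vhat_\OSIRIS^{\pi_e}(\tau;\theta')\,\rho_{\theta'}^\complement(\tau)$, unbiasedness of IS, the definition of covariance, the lemma $\E_{\D\sim\pi_b}[\rho_{\theta'}^\complement(\tau^{(1)})]=1$, and the i.i.d.\ reduction from $\D$ to $\tau^{(1)}$. The only difference is cosmetic ordering---the paper starts from $V^{\pi_e}$ and derives the covariance, whereas you start from the covariance and recover $V^{\pi_e}$---and your closing remark on the fixed-$\theta'$ subtlety matches the paper's own caveat.
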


\begin{proof}
    From the fact that the IS estimator is unbiased:
    \begin{align}
        V^{\pi_e} &= \E_{\D\sim\pi_b}[\Vhat_\IS(\D)] = \E_{\D\sim\pi_b}[\Vhat_\IS^{\pi_e}(\tau^{(1)})]
        \label{appendix-step:ideal-case-linear}
        \\
        &= \E_{\D\sim\pi_b}[\Vhat_\OSIRIS^{\pi_e}(\tau^{(1)};\theta')\cdot \rho_{\theta'}^\complement(\tau^{(1)})]
        \label{appendix-step:ideal-case-split}
        \\
        &= \E_{\D\sim\pi_b}[\Vhat_\OSIRIS^{\pi_e}(\tau^{(1)};\theta')] \E_{\D\sim\pi_b}[\rho_{\theta'}^\complement(\tau^{(1)})]
            + \Cov_{\D\sim\pi_b}[\Vhat_\OSIRIS^{\pi_e}(\tau^{(1)};\theta'),\, \rho_{\theta'}^\complement(\tau^{(1)})]
        \label{appendix-step:ideal-case-covar}
        \\
        &= \E_{\D\sim\pi_b}[\Vhat_\OSIRIS^{\pi_e}(\tau^{(1)};\theta')]
            + \Cov_{\D\sim\pi_b}[\Vhat_\OSIRIS^{\pi_e}(\tau^{(1)};\theta'),\, \rho_{\theta'}^\complement(\tau^{(1)})]
        \label{appendix-step:ideal-case-expect}
    \end{align}
    where we use the assumption that trajectories are sampled i.i.d. (\ref{appendix-step:ideal-case-linear}), the relationship between IS and OSIRIS (\ref{appendix-step:ideal-case-split}), the definition of covariance (\ref{appendix-step:ideal-case-covar}), and then Equation~\ref{lem:lr-expectation} (\ref{appendix-step:ideal-case-expect}).
\end{proof}

\subsection{Decomposition of the Covariance Term}
\label{appendix-sec:cov-indivlrs}

We can decompose the covariance involving the \textit{product} of likelihood ratios into a sum of covariances involving \textit{individual} likelihood ratios. We use the notation $\rho_{1:\ell}^\complement$ to indicate the product of the 1st to $\ell$th irrelevant likelihood ratios.
\begin{align}
    \Cov_{\D \sim \pi_b}&[\Vhat_\OSIRIS(\tau^{(1)}; \theta'),\, \rho_\OSIRIS^\complement(\tau^{(1)}; \theta')] \nonumber
    \\
    &=
    \E_{\D \sim \pi_b}[
        \Vhat_\OSIRIS(\tau^{(1)}; \theta')
        \rho_\OSIRIS^\complement(\tau^{(1)}; \theta')
        ]
    -
    \E_{\D \sim \pi_b}[
        \Vhat_\OSIRIS(\tau^{(1)}; \theta')
        ]
    \E_{\D \sim \pi_b}[
        \rho_\OSIRIS^\complement(\tau^{(1)}; \theta')
        ]
    \label{appendix-step:covindivlrs-def}
    \\
    &=
    \E_{\D \sim \pi_b}[
        \Vhat_\OSIRIS(\tau^{(1)}; \theta')
        \rho_\OSIRIS^\complement(\tau^{(1)}; \theta')
        ]
    -
    \E_{\D \sim \pi_b}[
        \Vhat_\OSIRIS(\tau^{(1)}; \theta')
        ]
    \label{appendix-step:covindivlrs-lem}
    \\
    &=
    \E_{\D \sim \pi_b}[
        \Vhat_\OSIRIS(\tau^{(1)}; \theta')
        \rho_{1:\ell-1}^\complement(\tau^{(1)}; \theta')
        ]
    \E_{\D \sim \pi_b}[
        \rho_{\ell}^\complement(\tau^{(1)}; \theta')
        ]
    -
    \E_{\D \sim \pi_b}[
        \Vhat_\OSIRIS(\tau^{(1)}; \theta')
        ]
    \nonumber\\&\qquad
    +
    \Cov_{\D \sim \pi_b}[
        \Vhat_\OSIRIS(\tau^{(1)}; \theta')
        \rho_{1:\ell-1}^\complement(\tau^{(1)}; \theta')
        ,\,
        \rho_{\ell}^\complement(\tau^{(1)}; \theta')
        ]
    \label{appendix-step:covindivlrs-covdef-again}
    \\
    &=
    \E_{\D \sim \pi_b}[
        \Vhat_\OSIRIS(\tau^{(1)}; \theta')
        \rho_{1:\ell-1}^\complement(\tau^{(1)}; \theta')
        ]
    -
    \E_{\D \sim \pi_b}[
        \Vhat_\OSIRIS(\tau^{(1)}; \theta')
        ]
    \nonumber\\&\qquad
    +
    \Cov_{\D \sim \pi_b}[
        \Vhat_\OSIRIS(\tau^{(1)}; \theta')
        \rho_{1:\ell-1}^\complement(\tau^{(1)}; \theta')
        ,\,
        \rho_{\ell}^\complement(\tau^{(1)}; \theta')
        ]
    \label{appendix-step:covindivlrs-lem-again}
    \\
    &=
    \E_{\D \sim \pi_b}[
        \Vhat_\OSIRIS(\tau^{(1)}; \theta')
        ]
    -
    \E_{\D \sim \pi_b}[
        \Vhat_\OSIRIS(\tau^{(1)}; \theta')
        ]
    \nonumber\\&\qquad
    +
    \Cov_{\D \sim \pi_b}[
        \Vhat_\OSIRIS(\tau^{(1)}; \theta')
        \rho_{1:\ell-1}^\complement(\tau^{(1)}; \theta')
        ,\,
        \rho_{\ell}^\complement(\tau^{(1)}; \theta')
        ]
    \nonumber\\&\qquad
    +
    \Cov_{\D \sim \pi_b}[
        \Vhat_\OSIRIS(\tau^{(1)}; \theta')
        \rho_{1:\ell-2}^\complement(\tau^{(1)}; \theta')
        ,\,
        \rho_{\ell-1}^\complement(\tau^{(1)}; \theta')
        ]
    \nonumber\\&\qquad
    +\cdots
    \label{appendix-step:covindivlrs-recurse}
\end{align}
where we use the definition of covariance (\ref{appendix-step:covindivlrs-def}), Equation~\ref{lem:lr-expectation} (\ref{appendix-step:covindivlrs-lem}), the definition of covariance again (\ref{appendix-step:covindivlrs-covdef-again}), Equation~\ref{lem:lr-expectation} again (\ref{appendix-step:covindivlrs-lem-again}), and repeat (\ref{appendix-step:covindivlrs-recurse}). Eventually, the $\displaystyle\E_{\D \sim \pi_b}[\Vhat_\OSIRIS(\tau^{(1)}; \theta')]$ will cancel, leaving us with the sum of individual covariances.

\section{Derivations for the OSIRIS Estimator using State Relevance}

\subsection{Composite Policy $\pi_e'$}
\label{appendix-sec:pf-composite-policy-lem}

\begin{lemma} \label{appendix-lem:composite-policy}
    Let $\pi_e'$ be a composite policy:
    \begin{equation}
        \pi_e'(a\given s; \theta) \equiv \begin{cases}
            \pi_e(a\given s) & \text{if } \theta(s)=1 \text{ (relevant)} \\
            \pi_b(a\given s) & \text{if } \theta(s)=0\text{ (irrelevant)}
        \end{cases}.
    \end{equation}
    Then the policy values of $\pi_e$ and $\pi_e'$ are equal:
    \begin{equation}
        V^{\pi_e}=V^{\pi_e'}.
    \end{equation}
\end{lemma}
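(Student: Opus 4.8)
The plan is to show that the state-value function of $\pi_e$ is itself a fixed point of the Bellman equation associated with the composite policy $\pi_e'$, and then invoke uniqueness of that fixed point to conclude $V^{\pi_e'}=V^{\pi_e}$. Throughout I write $V^{\pi}(s)$ for the expected return-to-go of policy $\pi$ starting from state $s$, and I use $Q^{\pi_e}(s,a)\equiv\E_{\tau\sim\pi_e}[g_{t:T}(\tau)\given s_t=s,a_t=a]$ as in Definition~\ref{def:state-relevance}. Since $V^{\pi}=\E_{s_1\sim P(s_1)}[V^{\pi}(s_1)]$, equality of the state-value functions for all $s$ immediately yields equality of the scalar policy values, so it suffices to prove $V^{\pi_e'}(s)=V^{\pi_e}(s)$ for every $s$.

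The core of the argument is to verify that $V^{\pi_e}$ satisfies the Bellman consistency equation for $\pi_e'$, namely that for every state $s$,
\begin{equation}
V^{\pi_e}(s)=\sum_{a\in\mathcal{A}}\pi_e'(a\given s;\theta)\Big[R(s,a)+\gamma\sum_{s'}P(s'\given s,a)V^{\pi_e}(s')\Big]=\sum_{a\in\mathcal{A}}\pi_e'(a\given s;\theta)\,Q^{\pi_e}(s,a),
\end{equation}
where I used the Bellman relation $Q^{\pi_e}(s,a)=R(s,a)+\gamma\sum_{s'}P(s'\given s,a)V^{\pi_e}(s')$. I would split into two cases according to $\theta(s)$. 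If $s$ is relevant ($\theta(s)=1$), then $\pi_e'(\cdot\given s;\theta)=\pi_e(\cdot\given s)$, so the right-hand side is exactly $\sum_a\pi_e(a\given s)Q^{\pi_e}(s,a)=V^{\pi_e}(s)$. If $s$ is irrelevant ($\theta(s)=0$), then by Definition~\ref{def:state-relevance} the value $Q^{\pi_e}(s,a)=c_s$ is constant in $a$; hence $\sum_a\pi_e'(a\given s;\theta)Q^{\pi_e}(s,a)=\sum_a\pi_b(a\given s)c_s=c_s$, which equals $V^{\pi_e}(s)=\sum_a\pi_e(a\given s)c_s=c_s$ as well. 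In both cases the identity holds, so $V^{\pi_e}$ solves the Bellman equation for $\pi_e'$.

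Finally I would invoke uniqueness of the solution to that Bellman equation to conclude $V^{\pi_e'}=V^{\pi_e}$ pointwise, and then average over the initial-state distribution. I expect the uniqueness step to be the main obstacle, in the sense that it is where the structural assumptions matter: for $\gamma<1$ the Bellman operator is a $\gamma$-contraction in the sup norm and uniqueness follows from Banach's fixed-point theorem, while for the undiscounted case $\gamma=1$ I would instead rely on the finite-horizon/episodic structure of the MDP (every trajectory reaches a terminal state, so $\pi_e'$ is proper) and argue by backward induction over time steps, peeling off the terminal layer where the return-to-go is zero and propagating the case analysis above backward. The only genuinely delicate point is ensuring that switching to $\pi_b$ at irrelevant states does not jeopardize termination; this is inherited from the MDP setup, which guarantees termination regardless of the actions taken.
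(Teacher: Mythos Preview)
Your proposal is correct and follows essentially the same route as the paper: both arguments verify, via the same two-case analysis on $\theta(s)$, that $V^{\pi_e}(s)=\sum_a \pi_e'(a\given s)\,Q^{\pi_e}(s,a)$ and then appeal to the Bellman recursion to conclude $V^{\pi_e}(s)=V^{\pi_e'}(s)$ for all $s$ before averaging over $P(s_1)$. If anything, you are more explicit than the paper about the uniqueness step (contraction for $\gamma<1$, backward induction for the episodic $\gamma=1$ case), which the paper compresses into the phrase ``applying this logic with the recursive Bellman equation.''
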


\begin{proof}
    First, we will recall some notation. The state-action value function under policy $\pi_e$ is
    \begin{equation}
        Q^{\pi_e}(s, a)
        \equiv
        \E_{\tau\sim\pi_e}[g_{t:T}(\tau)\given s_{t}=s,\, a_{t}=a],
    \end{equation}
    and the state value function under policy $\pi_e$ is then
    \begin{equation}
        V^{\pi_e}(s)
        \equiv
        \E_{a\sim\pi_e(\cdot\given s)} \big[
            Q^{\pi_e}(s, a)
        \big].
    \end{equation}
    
    Now, for any state $s$,
    \begin{equation} \label{appendix-step:unbiased-composite-policy-swap}
        V^{\pi_e}(s)
        =
        \E_{a\sim\pi_e(\cdot\given s)} \big[
            Q^{\pi_e}(s, a)
        \big]
        =
        \E_{a\sim\pi_e'(\cdot\given s)} \big[
            Q^{\pi_e}(s, a)
        \big]
    \end{equation}
    because if $\theta(s)=0$, then we can replace $\pi_e$ in the expectation with anything because, by definition of state irrelevance, $Q^{\pi_e}(s,a)$ is constant with respect to $a$. Otherwise, if $\theta(s)=1$, then $\pi_e'$ is simply equivalent to $\pi_e$ by definition of the composite policy. By applying this logic with the recursive Bellman equation $V^{\pi_e}(s) = \E_{a\sim\pi_e(\cdot\given s)} \Big[
            \E_{s'\sim P(\cdot\given s, a)}\big[
                    R(s, a)
                    +
                    \gamma
                    V^{\pi_e}(s')
            \big]
        \Big]$, we conclude that for any state $s$, 
    \begin{equation}
        V^{\pi_e}(s) = V^{\pi_e'}(s).
    \end{equation}
    We take an expectation over the initial state distribution,
    \begin{equation}
        \E_{s_1\sim P(s_1)}\big[ V^{\pi_e}(s_1) \big] = \E_{s_1\sim P(s_1)}\big[ V^{\pi_e'}(s_1) \big],
    \end{equation}
    which is equivalent to
    \begin{equation}
        V^{\pi_e} = V^{\pi_e'}.
    \end{equation}
\end{proof}

\subsection{Bias using $\theta$}
\label{appendix-sec:unbiased-ideal}

\begin{theorem}
    \label{appendix-thm:OSIRIS-bias-ideal}
    Given the true relevance mapping $\theta$, the mean of the OSIRIS estimator is
    \begin{equation}
        \E_{\D\sim\pi_b}[\Vhat_{\OSIRIS}^{\pi_e}(\D; \theta)] = V^{\pi_e}
    \end{equation}
\end{theorem}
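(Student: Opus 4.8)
The plan is to recognize the OSIRIS estimator built from the true relevance mapping $\theta$ as nothing more than the ordinary importance sampling estimator that treats the composite policy $\pi_e'$ as the evaluation policy, and then to chain together three facts: (i) $\Vhat_\OSIRIS^{\pi_e}(\D;\theta)$ coincides with the ordinary IS estimator of $V^{\pi_e'}$; (ii) ordinary IS is unbiased, so its expectation under $\pi_b$ is $V^{\pi_e'}$; and (iii) $V^{\pi_e'}=V^{\pi_e}$ by Lemma~\ref{appendix-lem:composite-policy}. The heavy lifting has already been done in that lemma, so the remaining work is essentially a bookkeeping identity on the weights.

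The central step is rewriting the OSIRIS weight. First I would observe that, by Definition~\ref{def:general-osiris} with $\theta'=\theta$, the weight $\rho_\theta(\tau)$ is a product whose $t$-th factor is either the raw likelihood ratio $\rho_t(\tau)$ (when $\theta(s_t)=1$) or $1$ (when $\theta(s_t)=0$). I would then match these two cases against the composite policy: when $\theta(s_t)=1$ we have $\pi_e'(a_t\given s_t;\theta)=\pi_e(a_t\given s_t)$, so $\frac{\pi_e'(a_t\given s_t;\theta)}{\pi_b(a_t\given s_t)}=\rho_t(\tau)$; when $\theta(s_t)=0$ we have $\pi_e'(a_t\given s_t;\theta)=\pi_b(a_t\given s_t)$, so the ratio is exactly $1$. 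This yields the identity
\begin{equation}
    \rho_\theta(\tau)=\prod_{t=1}^{T}\frac{\pi_e'(a_t\given s_t;\theta)}{\pi_b(a_t\given s_t)},
\end{equation}
and therefore $\Vhat_\OSIRIS^{\pi_e}(\D;\theta)=\frac{1}{|\D|}\sum_{\tau\in\D}g(\tau)\,\rho_\theta(\tau)$ is precisely the ordinary IS estimator of the value of $\pi_e'$ from data drawn under $\pi_b$.

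From here I would invoke the unbiasedness of ordinary IS (stated in Section~\ref{sec:background}), applied with $\pi_e'$ in place of $\pi_e$, to conclude $\E_{\D\sim\pi_b}[\Vhat_\OSIRIS^{\pi_e}(\D;\theta)]=V^{\pi_e'}$, and finally apply Lemma~\ref{appendix-lem:composite-policy} to replace $V^{\pi_e'}$ by $V^{\pi_e}$. I do not expect a serious obstacle here; the only points requiring care are verifying that the exponentiated form $[\rho_t(\tau)]^{\theta(s_t)}$ genuinely collapses to the $\pi_e'/\pi_b$ ratio in both cases, and checking that the support condition underlying IS unbiasedness is inherited by $\pi_e'$ — which it is, since $\pi_e'$ is pointwise either $\pi_e$ or $\pi_b$ and hence remains covered by $\pi_b$. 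The conceptually delicate content, namely that replacing $\pi_e$ with $\pi_b$ on irrelevant states leaves the policy value unchanged, is entirely absorbed into Lemma~\ref{appendix-lem:composite-policy} through its Bellman-recursion argument and the definition of state irrelevance.
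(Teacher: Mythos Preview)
Your proposal is correct and follows essentially the same route as the paper: both recognize that $\rho_\theta(\tau)$ is exactly the ordinary IS weight for the composite policy $\pi_e'$, then invoke Lemma~\ref{appendix-lem:composite-policy} to pass from $V^{\pi_e'}$ to $V^{\pi_e}$. The only cosmetic difference is that the paper expands the change-of-measure integral explicitly (writing out $\pi_e^{\theta(s_t)}\pi_b^{1-\theta(s_t)}$ and identifying it with $\pi_e'$), whereas you appeal directly to the unbiasedness of ordinary IS stated in Section~\ref{sec:background}; the underlying argument is identical.
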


\begin{proof}
    The key idea here is that the likelihood ratio omission procedure in OSIRIS is equivalent to pretending $\pi_e'$ is the evaluation policy.
    \begin{align}
        \E_{\D\sim\pi_b}&\big[\Vhat_\OSIRIS^{\pi_e}(\D; \theta)\big] \\
        &= \label{appendix-step:osirismean-iid}
        \E_{\tau\sim\pi_b}\big[\Vhat_\OSIRIS^{\pi_e}(\tau; \theta)\big] \\
        &= \label{appendix-step:osirismean-def}
        \E_{\tau\sim\pi_b}\Big[
            g(\tau)
            \prod_{t=1}^{T} \Big[\frac{\pi_e(a_t\given s_t)}{\pi_b(a_t\given s_t)}\Big]^{\theta(s_{t})}
        \Big] \\
        &= \label{appendix-step:osirismean-integral}
        \int
            g(\tau)
            \prod_{t=1}^{T} \Big[\frac{\pi_e(a_t\given s_t)}{\pi_b(a_t\given s_t)}\Big]^{\theta(s_{t})}
            \pi_b(a_t\given s_t) P(s_{t+1}\given s_{t}, a_{t}) P(s_1)
            \, d\tau
            \\
        &= \label{appendix-step:osirismean-cancel}
        \int
            g(\tau)
            \prod_{t=1}^{T} \big[\pi_e(a_t\given s_t)\big]^{\theta(s_{t})}
            \big[\pi_b(a_t\given s_t)\big]^{1-\theta(s_{t})} P(s_{t+1}\given s_{t}, a_{t})
            P(s_1)
            \, d\tau
            \\
        &= \label{appendix-step:osirismean-composite-policy}
        \int
            g(\tau)
            \prod_{t=1}^{T} \pi_e'(a_t\given s_t) P(s_{t+1}\given s_{t}, a_{t})
            P(s_1)
            \, d\tau
            \\
        &= \label{appendix-step:osirismean-finish}
        \E_{\tau\sim\pi_e'}[
            g(\tau)
        ] =
        V^{\pi_e'} =
        V^{\pi_e}
    \end{align}
    where we use the assumption that trajectories are sampled i.i.d. (\ref{appendix-step:osirismean-iid}), the definition of the OSIRIS estimator (\ref{appendix-step:osirismean-def}), the definition of the expectation (\ref{appendix-step:osirismean-integral}), cancellation of terms (\ref{appendix-step:osirismean-cancel}), and the definition of the composite policy $\pi_e'$ (\ref{appendix-step:osirismean-composite-policy}). Finally, we clean up by reintroducing the expectation, using Lemma~\ref{appendix-lem:composite-policy}, and recovering the definition of the true policy value (\ref{appendix-step:osirismean-finish}).
\end{proof}

Note that if state $s$ is truly relevant but was omitted by OSIRIS, then bias would be introduced from Equation \ref{appendix-step:unbiased-composite-policy-swap}:
\begin{equation}
    \Big| \E_{a\sim\pi_b} [Q^{\pi_e}(s, a)]
    -
    \E_{a\sim\pi_e} [Q^{\pi_e}(s, a)]
    \Big| \neq 0
\end{equation}

However, if state $s$ is truly irrelevant but was kept by OSIRIS, then there would be no effect on bias because Equation \ref{appendix-step:unbiased-composite-policy-swap} still holds because $\pi_e$ in the expectation can be replaced with anything.

\subsection{Consistency using $\thetahat$}
\label{appendix-sec:pf-consistent}

\begin{theorem}
    If $|\mathcal{A}|=2$ and $\alpha > 0$, then as $|\D|\to\infty$
    \begin{equation}
        \E_{\D\sim\pi_b}[\Vhat_\OSIRIS^{\pi_e}(\D; \thetahat(\,\cdot\,; \D))] = V^{\pi_e}
    \end{equation}
\end{theorem}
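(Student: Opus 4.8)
The plan is to reduce the claim to controlling the probability of \emph{type II} errors of the relevance test, exploiting the asymmetry established in Appendix~\ref{appendix-sec:unbiased-ideal}: keeping a truly irrelevant state leaves the estimator unbiased, whereas omitting a truly \emph{relevant} state is the only mechanism that introduces bias. Throughout I would invoke the standing regularity of a finite MDP with bounded returns and $\pi_b(a\given s)$ bounded away from $0$, so that every likelihood ratio---and hence every weight $\rho_{\thetahat}(\tau)$ and each single-trajectory estimator $\Vhat_\OSIRIS^{\pi_e}(\tau;\thetahat)$---is uniformly bounded. I would also use that each reachable state is visited (at most once per trajectory, by the standing assumption of Algorithm~\ref{alg:estimate-relevance}) with fixed positive probability under $\pi_b$, so its visit count in $\D$ grows linearly in $|\D|$ and the lists $\mathcal{G}_+,\mathcal{G}_-$ grow without bound.

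First I would decouple $\thetahat$ from the trajectory it reweights. Writing $\E_{\D\sim\pi_b}[\Vhat_\OSIRIS^{\pi_e}(\D;\thetahat)] = \frac{1}{|\D|}\sum_n \E_{\D\sim\pi_b}[g(\tau^{(n)})\,\rho_{\thetahat}(\tau^{(n)})]$, I would replace inside each summand the full-data mapping $\thetahat(\cdot;\D)$ by the leave-one-out mapping $\thetahat_{-n}\equiv\thetahat(\cdot;\D\setminus\{\tau^{(n)}\})$, which is independent of $\tau^{(n)}$. The two mappings can differ only at states where deleting a single trajectory flips Welch's decision; I would argue this replacement error vanishes because such flips occur with probability tending to $0$ as visit counts grow, and the weights are bounded. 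This step is exactly where the independence demanded by Equation~\ref{lem:lr-expectation} and Theorem~\ref{appendix-thm:OSIRIS-bias-ideal} is restored, since those results require a mapping not computed from $\tau^{(n)}$ (the subtlety flagged in Appendix~\ref{appendix-sec:OSIRIS-expect-pf}).

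Next I would establish consistency of the test so that type II errors vanish. Conditioning on a reachable relevant state $s$ with $\pi_e\neq\pi_b$ (the only states whose omission can bias, since elsewhere the ratios equal $1$ and omission is harmless), the entries of $\mathcal{G}_+$ and $\mathcal{G}_-$ are i.i.d.\ unbiased estimates of $Q^{\pi_e}(s,a)$ for the two likelihood-ratio groups, and with $|\mathcal{A}|=2$ the ratio partition cleanly separates the two actions. Relevance (Definition~\ref{def:state-relevance}) makes $Q^{\pi_e}(s,\cdot)$ nonconstant, so the two groups have distinct population means; Welch's $t$-test is consistent against a fixed nonzero mean gap, so its power $\to1$ and the per-state type II probability $\to0$. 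A union bound over the finite state space then shows the event $E$ that \emph{no} relevant state is omitted by $\thetahat_{-n}$ has probability $\to1$. Conditioning on $\thetahat_{-n}$: on $E$ it keeps all relevant states, so Theorem~\ref{appendix-thm:OSIRIS-bias-ideal} together with the ``keeping irrelevant states is harmless'' extension (Appendix~\ref{appendix-sec:unbiased-ideal}) gives inner conditional expectation $V^{\pi_e}$; on $E^c$ it is merely bounded. Since $P(E^c)\to0$, each summand converges to $V^{\pi_e}$, and averaging yields the claim.

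The hard part will be the decoupling step: making rigorous that the single-trajectory perturbation $\thetahat(\cdot;\D)$ versus $\thetahat_{-n}$ leaves the test outcome on \emph{relevant} states unchanged with probability $\to1$ (for which I would use divergence of the Welch statistic toward $+\infty$ there), while noting that any flips it induces on \emph{irrelevant} states are bias-neutral and therefore do not obstruct the argument. Controlling this dependence between the estimated mapping and the reweighted data is the genuine subtlety; everything else---boundedness, test power, and the union bound over $\mathcal{S}$---is routine.
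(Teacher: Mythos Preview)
Your proposal is correct and follows the same skeleton as the paper's argument: consistency of Welch's test forces the type~II error probability to zero at every relevant state, whereupon Theorem~\ref{appendix-thm:OSIRIS-bias-ideal} together with the ``keeping irrelevant states is harmless'' remark in Appendix~\ref{appendix-sec:unbiased-ideal} delivers unbiasedness in the limit. The paper's own proof (Appendix~\ref{appendix-sec:pf-consistent}) is a three-sentence sketch that asserts exactly this and stops.

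Where you go beyond the paper is the leave-one-out decoupling. The paper never confronts the dependence between $\thetahat(\cdot;\D)$ and the trajectories it reweights---the very issue flagged in Appendix~\ref{appendix-sec:OSIRIS-expect-pf} as a precondition for Equation~\ref{lem:lr-expectation}---and simply invokes test consistency as though $\thetahat$ were a fixed deterministic mapping. Your replacement of $\thetahat(\cdot;\D)$ by $\thetahat_{-n}$ with a single-trajectory perturbation argument, the explicit conditioning on the event $E$, and the union bound over the finite $\mathcal{S}$ are the natural devices for making the paper's informal ``all relevant likelihood ratios will be kept'' precise, and the paper supplies none of them. So your route is the paper's route, executed carefully; what it buys you is an honest treatment of the data-dependence of $\thetahat$ that the paper elides.
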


\begin{proof}
    Welch's two-sample $t$-test is consistent, which means that for all $s\in\mathcal{S}$ where $\theta(s)=1$, the test will give $\thetahat(s)=1$ as $|\D|\to\infty$. The binary classification of actions when calculating $\hat{\theta}$ has no effect on this fact because the action space is already assumed to be binary. Thus, as $|\D|\to\infty$, all relevant likelihood ratios will be kept by OSIRIS, so in this limit, the estimator will be unbiased (Appendix~\ref{appendix-sec:unbiased-ideal}).
\end{proof}

\subsection{Step-Wise OSIRIS}
\label{appendix-sec:stepwise-OSIRIS}

\begin{theorem}
    Let state $s\in\mathcal{S}$ be irrelevant to the reward $\Delta t$-steps away if
    \begin{equation}
        \E_{\tau\sim\pi_e}[r_{t+\Delta t}\given s_{t}=s,a_{t}=a] = \mathrm{constant}
        ,\quad
        \forall a\in\mathcal{A}.
    \end{equation}
    Otherwise, $s$ is relevant to the reward $\Delta t$-steps away. Using this condition, we define $\theta_{\Delta t}:\mathcal{S}\to\{0,1\}$ where $\theta_{\Delta t}(s)=0$ if $s$ is irrelevant to the reward $\Delta t$-steps away, and otherwise $\theta_{\Delta t}(s)=1$. Then
    \begin{equation}
        \hat{V}_{\substack{\textnormal{step-wise} \\ \textnormal{OSIRIS}}}^{\pi_e}(\D; \theta_{\Delta t})
        \equiv
        \frac{1}{|\D|}\sum_{\tau\in\D}
        \sum_{t'=1}^{T}
        \Big(
            \gamma^{t'-1}
            r_{t'}
            \prod_{t=1}^{T} \big[\rho_{t}(\tau)\big]^{\theta_{t' - t}(s_{t})}
        \Big)
    \end{equation}
    is an unbiased estimator of $V^{\pi_e}$.
\end{theorem}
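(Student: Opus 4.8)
The plan is to exploit linearity of expectation to split the estimator into one term per reward $r_{t'}$ and to show that each such term is an unbiased estimate of $\gamma^{t'-1}\E_{\tau\sim\pi_e}[r_{t'}]$; summing over $t'$ then recovers $V^{\pi_e}=\E_{\tau\sim\pi_e}[g(\tau)]$. First I would use the i.i.d. assumption to reduce $\E_{\D\sim\pi_b}[\,\cdot\,]$ to a single-trajectory expectation $\E_{\tau\sim\pi_b}[\,\cdot\,]$, then pull the finite sum over $t'$ outside the expectation, so that it suffices to analyze, for each fixed $t'$,
\[
I_{t'} \equiv \E_{\tau\sim\pi_b}\Big[r_{t'}\prod_{t=1}^{T}\big[\rho_t(\tau)\big]^{\theta_{t'-t}(s_t)}\Big].
\]

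Next I would mimic the integral-cancellation argument used for Theorem~\ref{appendix-thm:OSIRIS-bias-ideal}. Writing $I_{t'}$ as an integral against the $\pi_b$ joint trajectory density and cancelling the $\pi_b(a_t\given s_t)$ factors against the denominators of the retained likelihood ratios, each time-$t$ factor becomes $[\pi_e(a_t\given s_t)]^{\theta_{t'-t}(s_t)}[\pi_b(a_t\given s_t)]^{1-\theta_{t'-t}(s_t)}$. This is exactly the per-step density of a \emph{non-stationary} composite policy $\pi_{t'}'$ that follows $\pi_e$ at time $t$ when $\theta_{t'-t}(s_t)=1$ and $\pi_b$ otherwise, giving $I_{t'}=\E_{\tau\sim\pi_{t'}'}[r_{t'}]$. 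Note that by the MDP's causal structure $\theta_{\Delta t}(s)=0$ for every $\Delta t<0$ (an action cannot affect an earlier reward), so for $t>t'$ the factor is just $\pi_b(a_t\given s_t)$ and the post-$t'$ variables integrate out; this is also what makes the estimator collapse to PDIS under $\theta^{\mathrm{PDIS}}_{\Delta t}$.

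The main work, and the step I expect to be the real obstacle, is the analog of Lemma~\ref{appendix-lem:composite-policy}: showing $\E_{\tau\sim\pi_{t'}'}[r_{t'}]=\E_{\tau\sim\pi_e}[r_{t'}]$. The complication relative to Lemma~\ref{appendix-lem:composite-policy} is that the composite policy is tied to a single target reward and is time-varying, since the relevance offset $t'-t$ changes with $t$. I would introduce a reward-specific value function $V_{t'}^{\pi}(s;t)\equiv\E_{\tau\sim\pi}[r_{t'}\given s_t=s]$ and $Q_{t'}^{\pi}(s,a;t)\equiv\E_{\tau\sim\pi}[r_{t'}\given s_t=s,a_t=a]$, which obey the reward-free recursion
\[
V_{t'}^{\pi}(s;t)=\E_{a\sim\pi(\cdot\given s)}\E_{s'\sim P(\cdot\given s,a)}[V_{t'}^{\pi}(s';t+1)]
\]
for $t<t'$, with base case $V_{t'}^{\pi}(s;t')=\E_{a\sim\pi(\cdot\given s)}[R(s,a)]$. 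I would then prove $V_{t'}^{\pi_e}(s;t)=V_{t'}^{\pi_{t'}'}(s;t)$ for all $s$ by backward induction on $t$ from $t'$ down to $1$. In the inductive step, the identity $Q_{t'}^{\pi_e}(s,a;t)=\E_{s'}[V_{t'}^{\pi_e}(s';t+1)]$ lets me substitute the inductive hypothesis; then at a relevant state ($\theta_{t'-t}(s)=1$) the two policies coincide at $s$, while at an irrelevant state ($\theta_{t'-t}(s)=0$) the defining property that $Q_{t'}^{\pi_e}(s,a;t)$ is constant in $a$ makes $\E_{a\sim\pi_{t'}'}[Q_{t'}^{\pi_e}(s,a;t)]=\E_{a\sim\pi_e}[Q_{t'}^{\pi_e}(s,a;t)]$, exactly as in Equation~\ref{appendix-step:unbiased-composite-policy-swap}. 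Either way $V_{t'}^{\pi_{t'}'}(s;t)=V_{t'}^{\pi_e}(s;t)$, closing the induction.

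Finally I would take the expectation over the initial state distribution $P(s_1)$ to conclude $\E_{\tau\sim\pi_{t'}'}[r_{t'}]=\E_{\tau\sim\pi_e}[r_{t'}]$, hence $I_{t'}=\E_{\tau\sim\pi_e}[r_{t'}]$, and reassemble $\sum_{t'=1}^{T}\gamma^{t'-1}I_{t'}=\E_{\tau\sim\pi_e}[g(\tau)]=V^{\pi_e}$. The only points needing care are keeping the time index straight in the non-stationary backward induction and checking the base case at $t=t'$, where the offset is $0$ and relevance is governed solely by whether $R(s,a)$ depends on $a$.
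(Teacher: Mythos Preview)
Your proposal is correct and follows essentially the same route as the paper's proof: reduce to a single trajectory, pull the sum over $t'$ outside by linearity, do the integral-cancellation to land on a composite policy, and then establish the step-wise analog of Lemma~\ref{appendix-lem:composite-policy} before reassembling. Your treatment is in fact more careful than the paper's---you make explicit that the composite policy $\pi_{t'}'$ is non-stationary in $t$ and set up the backward induction on $V_{t'}^{\pi}(s;t)$ rigorously, whereas the paper's notation $\pi_e'(a\given s;\theta_{\Delta t})$ leaves the time-dependence implicit and handles the recursion with a one-line ``recursively applying this logic.''
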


\begin{proof}
    This proof is analogous to Sections~\ref{appendix-sec:pf-composite-policy-lem} and \ref{appendix-sec:unbiased-ideal} where we use a composite policy
    \begin{equation} \label{appendix-eqn:stepwise-composite-policy}
        \pi_e'(a\given s; \theta_{\Delta t}) \equiv \begin{cases}
            \pi_e(a\given s) & \text{if } \theta_{\Delta t}(s)=1 \text{ (relevant)} \\
            \pi_b(a\given s) & \text{if } \theta_{\Delta t}(s)=0\text{ (irrelevant)}
        \end{cases}.
    \end{equation}
    We will first prove the lemma in Equation~\ref{appendix-eqn:lemma-stepwise} and next prove that the step-wise OSIRIS estimator is equivalent to pretending $\pi_e'$ is the evaluation policy.
    
    First, the expected reward at each individual time step $t'$ is
    \begin{align}
        \E_{\tau\sim\pi_e}\big[
            R(s_{t'}, a_{t'})
        \big]
        &=
        \E_{s_t\sim P(s_t)}\big[
            \E_{a_t\sim \pi_e(a_t\given s_t)}\big[
                \E_{\tau\sim\pi_e}\big[
                    R(s_{t'}, a_{t'})
                    \given s_t, a_t
                \big]
            \big]
        \big]
        \\
        &=
        \E_{s_t\sim P(s_t)}\big[
            \E_{a_t\sim \pi_e'(a_t\given s_t)}\big[
                \E_{\tau\sim\pi_e}\big[
                    R(s_{t'}, a_{t'})
                    \given s_t, a_t
                \big]
            \big]
        \big]
    \end{align}
    where we use the law of total expectation by introducing $P(s_t)$ as the stationary state distribution. Then we introduce the composite policy defined in Equation~\ref{appendix-eqn:stepwise-composite-policy} because $\pi_e'$ is equivalent to $\pi_e$ except in states that are irrelevant (to the reward $(t' - t)$-steps away) where, by definition, we can replace $\pi_e$ in the expectation with anything. Recursively applying this logic, we find that
    \begin{equation} \label{appendix-eqn:lemma-stepwise}
        \E_{\tau\sim\pi_e}\big[
            R(s_{t'}, a_{t'})
        \big]
        =
        \E_{\tau\sim\pi_e'}\big[
            R(s_{t'}, a_{t'})
        \big]
    \end{equation}
    Now, we will use this fact to show the step-wise OSIRIS estimator is unbiased:
    \begin{align}
        \E_{\D\sim\pi_b}&\big[\Vhat_{\substack{\textnormal{step-wise} \\ \textnormal{OSIRIS}}}^{\pi_e}(\D; \theta_{\Delta t})\big] \\
        &= \label{appendix-step:stepwise-iid}
        \E_{\tau\sim\pi_b}\big[\Vhat_{\substack{\textnormal{step-wise} \\ \textnormal{OSIRIS}}}^{\pi_e}(\tau; \theta_{\Delta t})\big] \\
        &= \label{appendix-step:stepwise-def}
        \E_{\tau\sim\pi_b}\Big[
            \sum_{t'=1}^{T}
            \gamma^{t'-1}
            r_{t'}
            \prod_{t=1}^{T} \Big[\frac{\pi_e(a_t\given s_t)}{\pi_b(a_t\given s_t)}\Big]^{\theta_{t' - t}(s_{t})}
        \Big] \\
        &= \label{appendix-step:stepwise-linearity}
        \sum_{t'=1}^{T}
        \gamma^{t'-1}
        \E_{\tau\sim\pi_b}\Big[
            r_{t'}
            \prod_{t=1}^{T} \Big[\frac{\pi_e(a_t\given s_t)}{\pi_b(a_t\given s_t)}\Big]^{\theta_{t' - t}(s_{t})}
        \Big] \\
        &= \label{appendix-step:stepwise-integral}
        \sum_{t'=1}^{T}
        \gamma^{t'-1}
        \int
            r_{t'}
            \prod_{t=1}^{T} \Big[\frac{\pi_e(a_t\given s_t)}{\pi_b(a_t\given s_t)}\Big]^{\theta_{t' - t}(s_{t})}
            \pi_b(a_t\given s_t) P(s_{t+1}\given s_{t}, a_{t}) P(s_1)
            \, d\tau
            \\
        &= \label{appendix-step:stepwise-cancel}
        \sum_{t'=1}^{T}
        \gamma^{t'-1}
        \int
            r_{t'}
            \prod_{t=1}^{T} \big[\pi_e(a_t\given s_t)\big]^{\theta_{t' - t}(s_{t})}
            \big[\pi_b(a_t\given s_t)\big]^{1-\theta_{t' - t}(s_{t})} P(s_{t+1}\given s_{t}, a_{t})
            P(s_1)
            \, d\tau
            \\
        &= \label{appendix-step:stepwise-composite-policy}
        \sum_{t'=1}^{T}
        \gamma^{t'-1}
        \int
            r_{t'}
            \prod_{t=1}^{T} \pi_e'(a_t\given s_t) P(s_{t+1}\given s_{t}, a_{t})
            P(s_1)
            \, d\tau
            \\
        &= \label{appendix-step:stepwise-finish}
        \sum_{t'=1}^{T}
        \gamma^{t'-1}
        \E_{\tau\sim\pi_e'}[
            r_{t'}
        ] =
        \sum_{t'=1}^{T}
        \gamma^{t'-1}
        \E_{\tau\sim\pi_e}[
            r_{t'}
        ] =
        V^{\pi_e}
    \end{align}
    where we use the assumption that trajectories are sampled i.i.d. (\ref{appendix-step:stepwise-iid}), the definition of the step-wise OSIRIS estimator (\ref{appendix-step:stepwise-def}), linearity of expectation (\ref{appendix-step:stepwise-linearity}), the definition of the expectation (\ref{appendix-step:stepwise-integral}), cancellation of terms (\ref{appendix-step:stepwise-cancel}), and the definition of the composite policy $\pi_e'$ (\ref{appendix-step:stepwise-composite-policy}). Finally, we clean up by reintroducing the expectation, using Equation~\ref{appendix-eqn:lemma-stepwise}, and recovering the definition of the true policy value (\ref{appendix-step:stepwise-finish}).
\end{proof}

\section{Environment Descriptions}
\label{appendix-sec:env-desc}

All reported results are aggregated over 200 independent trials with discount factor $\gamma=1$.

\subsection{Gridworlds}

\begin{figure}[h]
\vskip 0.2in
\begin{center}
\centerline{\includegraphics{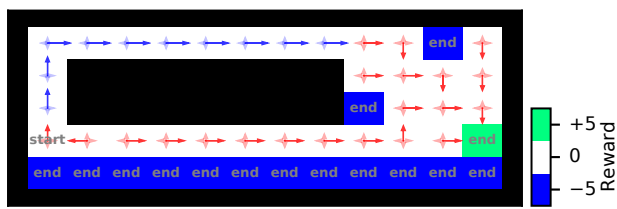}}
\caption{Environment and policies for Gridworld experiments.}
\label{appendix-fig:gridworld-env}
\end{center}
\vskip -0.1in
\end{figure}

The Gridworld environment and policies are shown in Figure~\ref{appendix-fig:gridworld-env}. From each state, the agent takes the action indicated by the large arrow with probability $1 - \epsilon$ and takes a random action with probability $\epsilon$. The evaluation policy has $\epsilon=0.1$, and the behavior policy has $\epsilon=0.5$. Rewards are given for entering the indicated states. The data size is $|\D|=25$. These parameters describe the \textit{Dilly-Dallying Gridworld}. The \textit{Express Gridworld} variant is identical except that the behavior policy has $\epsilon=0.2$ only in the corridor states (blue arrows in Figure~\ref{appendix-fig:gridworld-env}).

\subsection{Lunar Lander}

\textit{Lunar Lander} is a popular deterministic benchmark environment in OpenAI gym with an 8-dimensional continuous state space and 4 discrete actions \citepappendix{brockman2016openai}. The goal is to safely land on the ground by firing engines that push the agent left, right, or up. The reward from each transition is related to the agent's distance from the target landing site with small penalties for firing engines; around 100-140 is accumulated for a typical landing. A successful landing earns an additional $+100$ whereas a crash earns $-100$, and both events end the trajectory. Or the trajectory is automatically ended after 1000 steps.

We used $\epsilon$-greedy behavior and evaluation policies based on an optimal policy learned by DQN \citepappendix{mnih2013dqn}. The behavior policy takes a random action with probability $\epsilon=0.1$, and the evaluation policy does so with probability $\epsilon=0.05$. The data size is $|\D|=50$.

Calculating $\thetahat(s;\D)$ requires a discrete state space, so only for this step of the OSIRIS algorithm, we discretized the state space by creating three linearly spaced bins per state dimension. In each trial $\D$, this procedure resulted in about 200 discrete states that were represented at least once.

\subsection{Cart Pole}

\textit{Cart Pole} is another deterministic benchmark environment in OpenAI gym. It has a 4-dimensional continuous state space and 2 discrete actions \citepappendix{barto83,brockman2016openai}. The goal is to apply leftward or rightward force to the bottom (the cart) of an inverted pendulum (the pole) to keep it balanced upright. The agent gets $+1$ reward for each transition until the the pole falls, which ends the trajectory.

The behavior and evaluation policies and the discrete state space were obtained as described for Lunar Lander, but the policies here used $\epsilon=0.25$ and $\epsilon=0.2$, respectively. The data size is $|\D|=50$. In each trial $\D$, about 16 discrete states were represented at least once.

\section{Extended Results}

\subsection{Alternative Implementations}
\label{appendix-sec:alternative-implementations}

To demonstrate the extent to which our analysis is robust to specific implementation choices, here we present empirical results for other possible implementations (Table~\ref{appendix-tab:estimator-mse}).

The \textit{Algorithm~\ref{alg:estimate-relevance}} implementation refers to the procedure presented in the main text with Welch's two-sample $t$-test. We also tried using \textit{Smirnov}'s non-parametric test \citepappendix{Hodges1958} where we increased the significance level to $\alpha=0.2$.

These two-sample statistical tests require that we binarize the action space and discretize the state space. First, in the main text, we proposed binarizing the actions based on their likelihood ratios. Here, we also tried assigning action classes based on whether they led to above- or below-average trajectory returns, which is the \textit{$g(\tau)$-Binary $\mathcal{A}$} implementation. Finally, we also tried directly regressing $\hat{Q}^{\pi_e}$ with a neural network model. This \textit{NN as $\hat{Q}^{\pi_e}$} implementation can certainly handle continuous states and it can handle action spaces $|\mathcal{A}|>2$. The model had one 32-dimensional hidden layer. For each trial (i.e. sample of $\D$), it was trained to minimize the Huber loss for 500 epochs, each looking at a minibatch of 128 transitions. We use the trained $\hat{Q}^{\pi_e}$ to calculate $\hat{\theta}(s)$ for a given $s$. First, we calculate the standard deviation of $\hat{Q}^{\pi_e}(s, a)$ for all $a\in\mathcal{A}$. We normalize this by dividing by the mean of $\hat{Q}^{\pi_e}(s, a)$ for all $a\in\mathcal{A}$. This value gives us a sense of how much the $Q^{\pi_e}$ value function varies for different actions taken from the same state. If the value is greater than $\alpha = 0.4$, then we output $\hat{\theta}(s)=1$ (relevant), and otherwise, $\hat{\theta}(s)=0$ (irrelevant).

\begin{table*}[t]
\caption{Comparison of mean squared errors for alternative implementations of OSIRIS state relevance implementation.} \label{appendix-tab:estimator-mse}
\vskip 0.15in
\begin{center}
\begin{small}
\begin{sc}
\begin{tabular}{ll|rrrrrrrr|r}
\toprule
&& \multicolumn{2}{b{0.8in}}{Algorithm 1} & \multicolumn{2}{b{0.8in}}{$g_\tau$-Binary $\mathcal{{A}}$} & \multicolumn{2}{b{0.8in}}{Smirnov} & \multicolumn{2}{b{0.8in}}{NN as $\hat{{Q}}^{{\pi_e}}$} & \multirow{2}{0.4in}{On-Policy} \\
&& osiris & osirwis & osiris & osirwis & osiris & osirwis & osiris & osirwis &  \\
\midrule
\multirow{3}{0.8in}{\textbf{Dilly-Dallying Gridworld}}
    & Mean & $\mathbf{1.3}$ & $1.1$ & $0.5$ & $0.4$ & $0.5$ & $0.5$ & $0.9$ & $0.0$ & $4.3$ \\
    & Std  & $2.0$ & $1.8$ & $\mathbf{1.7}$ & $1.8$ & $1.7$ & $1.8$ & $5.3$ & $2.4$ & $0.6$ \\
    & RMSE & $\mathbf{3.6}$ & $3.7$ & $4.2$ & $4.3$ & $4.1$ & $4.2$ & $6.3$ & $4.9$ & $0.6$ \\
\hline
\multirow{3}{0.8in}{\textbf{Express Gridworld}}
    & Mean & $0.7$ & $0.8$ & $0.7$ & $\mathbf{0.9}$ & $0.5$ & $0.6$ & $0.7$ & $0.3$ & $4.3$ \\
    & Std  & $\mathbf{1.2}$ & $1.3$ & $1.3$ & $1.5$ & $1.3$ & $1.4$ & $2.9$ & $2.1$ & $0.6$ \\
    & RMSE & $3.8$ & $\mathbf{3.7}$ & $3.8$ & $3.7$ & $4.0$ & $4.0$ & $4.6$ & $4.5$ & $0.6$ \\
\hline
\multirow{3}{0.8in}{\textbf{Cart Pole}}
    & Mean & $\mathbf{1068.9}$ & $759.7$ & $970.0$ & $622.3$ & $582.0$ & $640.7$ & $608.2$ & $608.2$ & $1503.6$ \\
    & Std  & $3961.8$ & $318.5$ & $5337.1$ & $381.8$ & $619.0$ & $308.3$ & $\mathbf{97.6}$ & $\mathbf{97.6}$ & $244.8$ \\
    & RMSE & $3985.5$ & $\mathbf{809.2}$ & $5363.7$ & $960.4$ & $1110.2$ & $916.3$ & $900.7$ & $900.7$ & $244.8$ \\
\hline
\multirow{3}{0.8in}{\textbf{Lunar Lander}}
    & Mean & $\mathbf{244.6}$ & $234.8$ & $241.2$ & $259.7$ & $286.9$ & $248.9$ & $222.9$ & $249.4$ & $245.3$ \\
    & Std  & $55.3$ & $23.5$ & $230.3$ & $\mathbf{13.5}$ & $171.7$ & $16.3$ & $62.0$ & $14.4$ & $6.8$ \\
    & RMSE & $55.3$ & $25.7$ & $230.3$ & $19.7$ & $176.6$ & $16.7$ & $65.9$ & $\mathbf{15.0}$ & $6.8$ \\
\bottomrule
\end{tabular}
\end{sc}
\end{small}
\end{center}
\end{table*}

In Table~\ref{appendix-tab:estimator-mse-oracle}, we also provide empirical results for the OSIRIS estimator using an ``oracle'' state relevance mapping where $\thetahat(s)=0$ (irrelevant) for all $s$ in the Gridworld corridor, and otherwise $\thetahat(s)=1$ (relevant). These results represent a practical bound on the accuracy improvement from likelihood ratio omission. As such, they also give a rough picture of the amount of variance contributed from estimating $\thetahat$ vs noise inherent to the data or variance contributed by OSIRIS's manipulating trajectory lengths (which is expected to be small).

\begin{table*}[t]
\caption{Mean squared errors for OSIRIS using an ``oracle'' state relevance mapping.} \label{appendix-tab:estimator-mse-oracle}
\vskip 0.15in
\begin{center}
\begin{small}
\begin{sc}
\begin{tabular}{ll|rr}
\toprule
&& \multicolumn{2}{b{0.8in}}{Oracle} \\
&& osiris & osirwis \\
\midrule
\multirow{3}{0.8in}{\textbf{Dilly-Dallying Gridworld}}
    & Mean & $3.1$ & $3.4$ \\
    & Std  & $1.9$ & $1.2$ \\
    & RMSE & $2.3$ & $1.5$ \\
\hline
\multirow{3}{0.8in}{\textbf{Express Gridworld}}
    & Mean & $2.9$ & $2.9$ \\
    & Std  & $3.1$ & $1.8$ \\
    & RMSE & $3.4$ & $2.3$ \\
\bottomrule
\end{tabular}
\end{sc}
\end{small}
\end{center}
\end{table*}

\subsection{Express Gridworld}
\label{appendix-sec:gridworldxp}

See Figures~\ref{appendix-fig:mseest}, \ref{appendix-fig:scatter}, and \ref{appendix-fig:gridworld-relevance-xp} for empirical results in the Express Gridworld environment. This figures show the same qualitative trends as reported in the main text for the Dilly-Dallying Gridworld. Figure \ref{appendix-fig:gridworld-relevance-xp} shows OSIRIS was more likely to label corridor states are relevant in Express Gridworld than Dilly-Dallying Gridworld, which is expected because there are fewer visits to the corridor states, so there is likely more noise being picked up by the statistical test.

\begin{figure*}[t]
\vskip 0.2in
\begin{center}
 \begin{subfigure}[b]{2.12in}
     \includegraphics{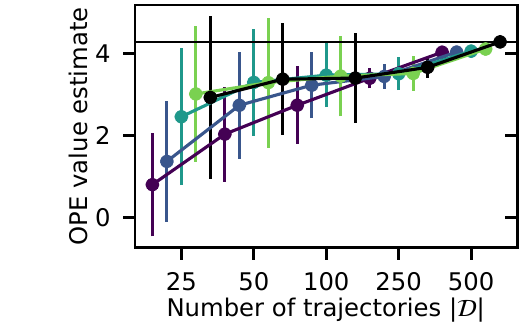}
 \end{subfigure}%
 \begin{subfigure}[b]{0.39in}
     \includegraphics{figures/fig2_legend}
 \end{subfigure}
\caption{Express Gridworld. Distributions of OSIRWIS estimates showing estimator consistency. Dots represent means, error bars represent standard deviations, and horizontal line represents the mean of the on-policy MC estimator. Colors indicate $\alpha$ values, where $\alpha=1$ is equivalent to ordinary WIS.}
\label{appendix-fig:mseest}
\end{center}
\vskip -0.2in
\end{figure*}

\begin{figure*}[t]
\vskip 0.2in
\begin{center}
 \begin{subfigure}[b]{2.12in}
     \includegraphics{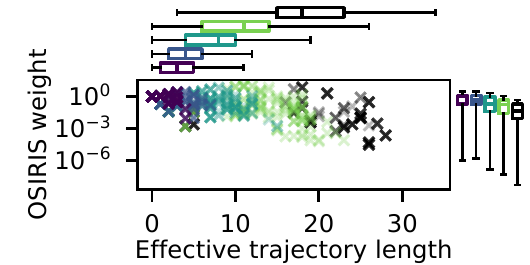}
 \end{subfigure}%
 \begin{subfigure}[b]{0.39in}
     \includegraphics{figures/fig3_legend}
 \end{subfigure}
\caption{Express Gridworld. Scatter plots show correlation between OSIRIS weights and effective trajectory lengths $\sum_{t=1}^{T}\thetahat(s_t)$. Boxplots show variance reduction of OSIRIS weights by shortening and evening of the effective trajectory lengths. Colors indicate $\alpha$ values, where $\alpha=1$ is equivalent to ordinary IS.}
\label{appendix-fig:scatter}
\end{center}
\vskip -0.2in
\end{figure*}

\begin{figure*}[t]
\vskip 0.2in
\begin{center}
\begin{subfigure}[t]{3.25in}
    \includegraphics{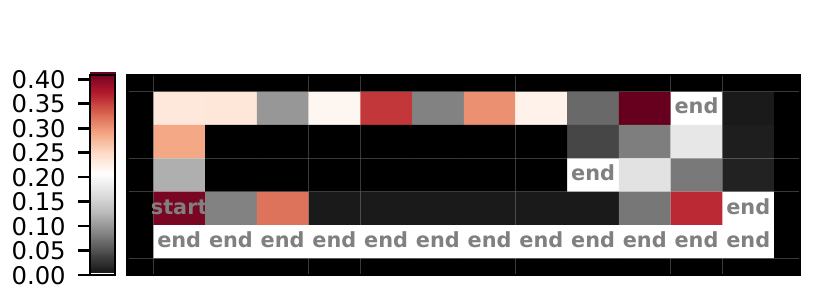}
    \caption{Express Gridworld}
    \label{appendix-fig:gridworld-relevance-xp}
\end{subfigure}
\vskip 0.2in
\begin{subfigure}[t]{\textwidth}
\begin{subfigure}[t]{1.6875in}
    \includegraphics{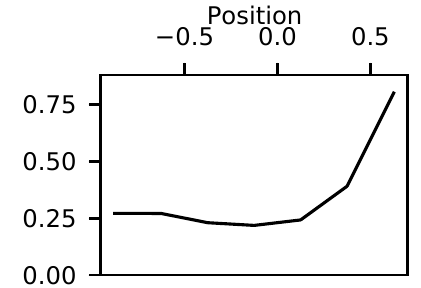}
\end{subfigure}%
\begin{subfigure}[t]{1.6875in}
    \includegraphics{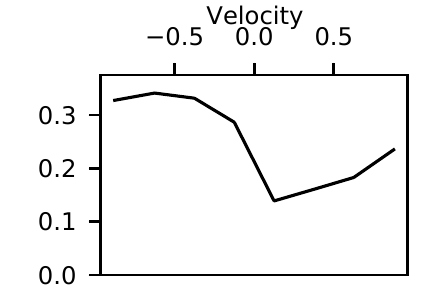}
\end{subfigure}%
\begin{subfigure}[t]{1.6875in}
    \includegraphics{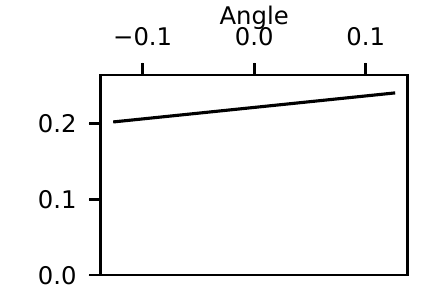}
\end{subfigure}%
\begin{subfigure}[t]{1.6875in}
    \includegraphics{figures/fig4b_relevance_CartPole_dim3}
\end{subfigure}
\caption{Cart Pole}
\label{appendix-fig:cartpole-relevance}
\end{subfigure}
\vskip 0.2in
\begin{subfigure}[t]{\textwidth}
\begin{subfigure}[t]{1.6875in}
    \includegraphics{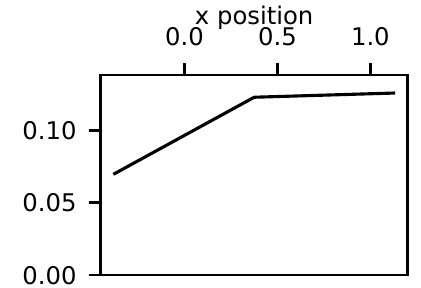}
\end{subfigure}%
\begin{subfigure}[t]{1.6875in}
    \includegraphics{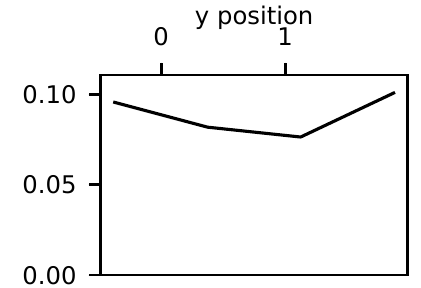}
\end{subfigure}%
\begin{subfigure}[t]{1.6875in}
    \includegraphics{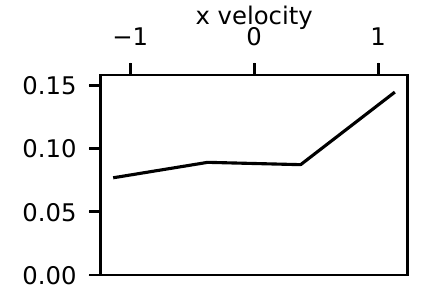}
\end{subfigure}%
\begin{subfigure}[t]{1.6875in}
    \includegraphics{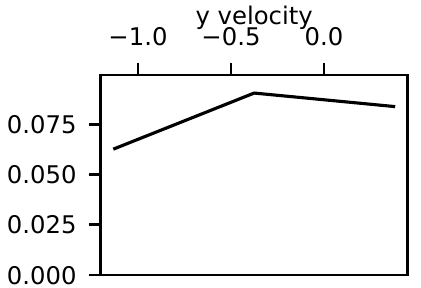}
\end{subfigure}\\
\begin{subfigure}[t]{1.6875in}
    \includegraphics{figures/fig4c_relevance_LunarLander_dim4}
\end{subfigure}%
\begin{subfigure}[t]{1.6875in}
    \includegraphics{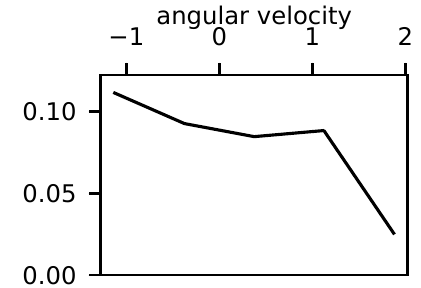}
\end{subfigure}%
\begin{subfigure}[t]{1.6875in}
    \includegraphics{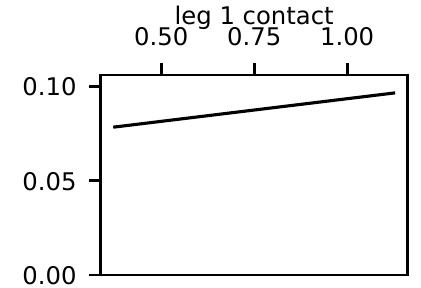}
\end{subfigure}%
\begin{subfigure}[t]{1.6875in}
    \includegraphics{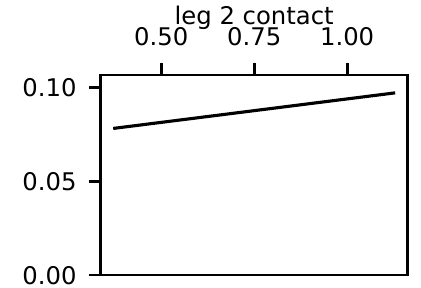}
\end{subfigure}
\caption{Lunar Lander}
\label{appendix-fig:lunarlander-relevance}
\end{subfigure}
\caption{Mean of estimated state relevance $\thetahat(s)$ from visits to the indicated states is represented by color (\subref{appendix-fig:gridworld-relevance-xp}) or on the $y$ axis (\subref{appendix-fig:cartpole-relevance}, \subref{appendix-fig:lunarlander-relevance}). States identified as relevant (i.e. $\thetahat(s)=1$) are key decision points where trajectory outcome is sensitive to action taken.}
\label{appendix-fig:relevance}
\end{center}
\vskip -0.2in
\end{figure*}

\subsection{Estimated State Relevance}
\label{appendix-sec:relevance-other-dims}

We show the estimated state relevance over all state dimensions in Cart Pole (Figure~\ref{appendix-fig:cartpole-relevance}) and Lunar Lander (Figure~\ref{appendix-fig:lunarlander-relevance}).

\bibliographyappendix{bibliography}
\bibliographystyleappendix{icml2021}

\end{document}